%% file: main.tex
\PassOptionsToPackage{colorlinks=true,linkcolor=linkColor,citecolor=linkColor,filecolor=linkColor,urlcolor=linkColor}{hyperref}
\documentclass{article}

\pdfoutput=1

\input{myheaders.tex}

\usepackage[dvipsnames]{xcolor}         % colors
\definecolor{linkColor}{rgb}{0.18,0.39,0.62}
\usepackage[colorlinks=true,linkcolor=linkColor,citecolor=linkColor,filecolor=linkColor,urlcolor=linkColor]{hyperref}       % \usepackage{url}
\usepackage{array}
\usepackage{fullpage}
\usepackage{adjustbox}
\usepackage{lineno}
\usepackage{enumitem}
\usepackage{cancel}
\usepackage[skins]{tcolorbox}
\tcbuselibrary{breakable}
\usepackage[utf8]{inputenc}

\usepackage{subcaption}
\usepackage{wrapfig}
\usepackage{fontawesome5}

\usepackage[table]{xcolor}
\definecolor{myblue}{RGB}{236, 248, 233}

\newcommand\samethanks[1][\value{footnote}]{\footnotemark[#1]}

\title{Multi-Turn On-Policy Distillation with Prefix Replay}
\author{Baohao Liao\thanks{Equal Contribution.} \thanks{Work done during an internship at
Microsoft.} $^1$$^2$\quad Hanze Dong\samethanks[1] $^1$\quad Christof Monz$^2$\quad Xinxing Xu$^1$\quad Li Dong$^1$\quad Furu Wei$^1$\\\\$^1$Microsoft Research\quad $^2$University of Amsterdam
}
\date{\small \texttt{\href{https://aka.ms/GeneralAI}{https://aka.ms/GeneralAI}}}
\begin{document}

\maketitle

\begin{abstract}
\input{sections/abs}
\end{abstract}

\vspace{0mm}
\begin{figure}[H]
    \centering
    \includegraphics[width=0.99\linewidth]{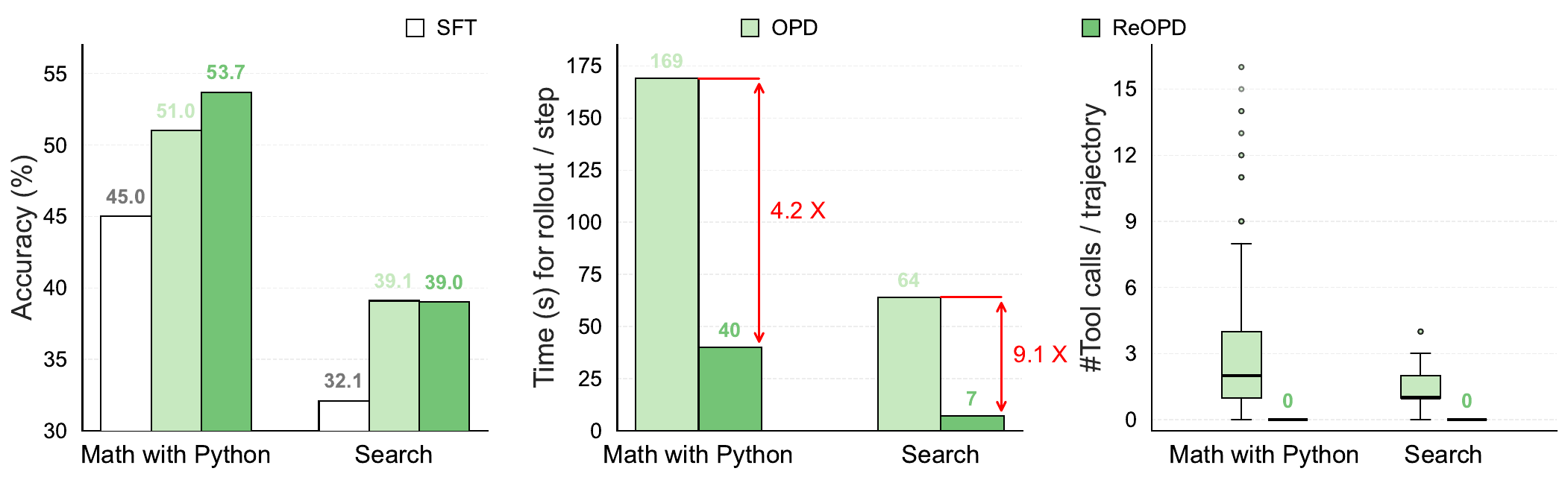}
    \vspace{-0mm}
  \caption[]{\textbf{ReOPD keeps the benefits of on-policy distillation while removing
  environment interaction.} ReOPD
  matches or improves OPD accuracy, but trains much faster per step and eliminates
  tool calls during training by replaying teacher-recorded prefixes instead of
  executing fresh environment rollouts. The student/teacher models are Qwen3-4B-Instruct-2507 and Qwen3-8B, respectively.}
  \label{fig:overall}
  \vspace{0mm}
\end{figure}

\newpage
\input{sections/intro}
\input{sections/related}
\input{sections/formulation}
\input{sections/aopd}

\input{sections/exp}

\input{sections/conclusion}

\bibliography{myrefs}
\bibliographystyle{apalike}

\end{document}

%% file: myheaders.tex
\usepackage{multirow}
\usepackage{threeparttable}
\usepackage{makecell}
\usepackage{booktabs} 
\usepackage{amsmath,amsfonts,bm}
\usepackage{enumitem}

\def\eqref#1{Equation~(\ref{#1})}
\def\plaineqref#1{\ref{#1}}
\def\1{\bm{1}}

\RequirePackage{amsmath}
\RequirePackage{amssymb}
\RequirePackage{amsthm}
\RequirePackage{bm} 
\RequirePackage{url}
\usepackage{multirow}
\usepackage{natbib}
\usepackage{graphicx}
\usepackage{subcaption}
\usepackage{makecell}
\usepackage{booktabs}
\usepackage{array}
\usepackage{url}
\usepackage{algorithm}
\usepackage{mathtools}

\usepackage{algpseudocode}

\usepackage{dsfont}

\usepackage[OT1]{fontenc}
\usepackage{natbib}

\usepackage{mathrsfs}

\usepackage{xcolor}
\usepackage{hyperref}
\usepackage{float}  % 放在导言区
\usepackage{placeins}  % \FloatBarrier to keep floats within their section
\usepackage{colortbl}
\def\##1\#{\begin{align}#1\end{align}}
\def\$#1\${\begin{align*}#1\end{align*}}

\usepackage{xcolor}

\definecolor{red1}{HTML}{f47983}
\definecolor{blue1}{HTML}{3eede7}
\definecolor{yellow1}{HTML}{f5dd6f}
\newtheorem{assumption}{Assumption}

\newtheorem{proposition}{Proposition}

%% file: sections/abs.tex
We study on-policy distillation (OPD) for agentic tasks, where an LLM agent interacts with an environment over multiple turns and a student imitates a teacher over these multi-turn interaction histories. Fully online OPD is costly because each update requires fresh student rollouts through the environment and teacher queries at visited histories. We propose Replayed-Prefix On-Policy Distillation (ReOPD), an off-environment alternative that reuses pre-collected teacher trajectories as replayed prefixes: the student acts at selected steps, while the teacher provides dense per-step supervision without executing new environment interactions. We show that multi-turn OPD introduces a prefix trap: making histories more student-on-policy improves relevance to the student, but can query the teacher on histories where its target is unreliable. This creates a two-sided distribution shift between student occupancy and teacher reliability. ReOPD addresses this by treating multi-turn OPD as a reliability-aware prefix distribution design and implements it with a simple step-decaying sampling schedule that emphasizes early, lower-shift prefixes. Across mathematical reasoning with Python and search environments over multiple teacher and student model scales, ReOPD preserves or improves OPD-level accuracy, uses zero tool calls during student training, and is at least 4$\times$ faster per rollout than OPD. ReOPD therefore turns expensive agent-environment interaction into a reusable offline resource, enabling scalable distillation across tools, tasks, and environments.

\begin{center}
{\large \faGlobe}\,\textbf{ Project Page: }\href{https://baohaoliao.github.io/ReOPD/}{\texttt{baohaoliao.github.io/ReOPD}} \\ \quad
{\large \faGithub}\,\textbf{ Code: }\href{https://github.com/baohaoliao/ReOPD}{\texttt{baohaoliao/ReOPD}} \quad\quad\quad\quad
\raisebox{-0.18em}{\includegraphics[height=1.1em]{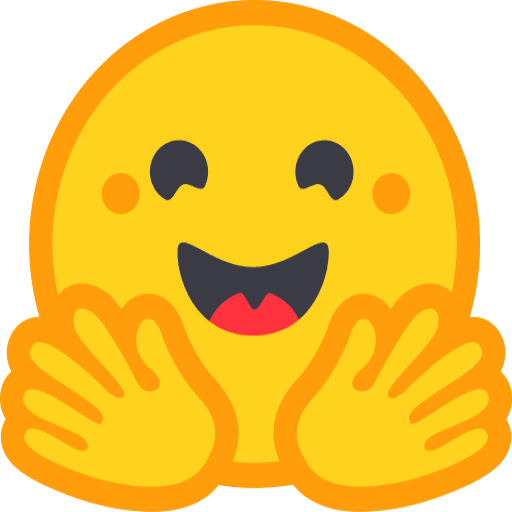}}\,\textbf{ Model \& Data: }\href{https://huggingface.co/collections/baohao/reopd}{\texttt{baohao/reopd}}
\end{center}

%% file: sections/intro.tex
\section{Introduction}

\vspace{-0.em}
\begin{figure}[t]
    \centering
    \includegraphics[width=0.99\linewidth]{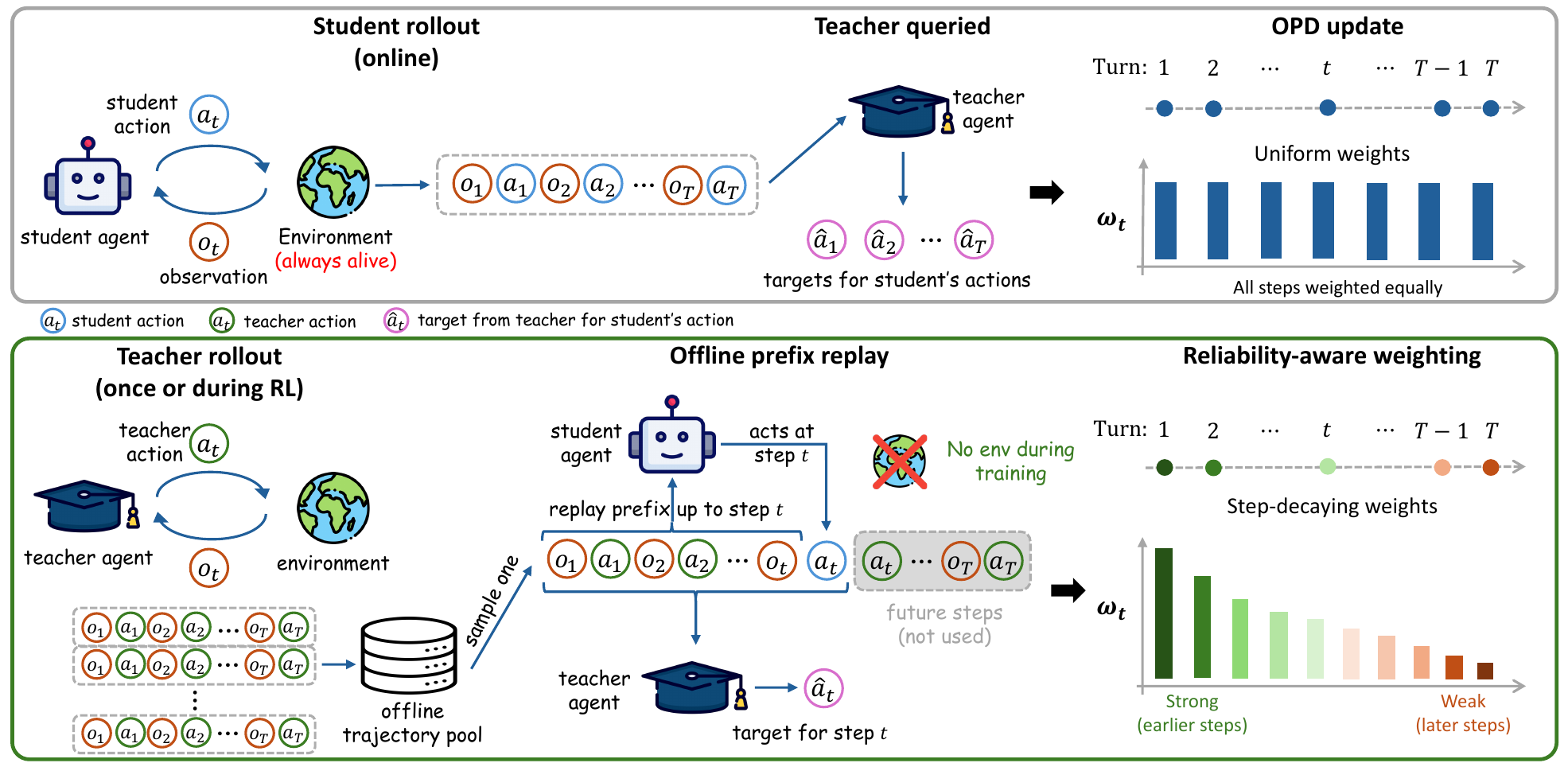}
  \caption[]{\textbf{Comparison between OPD and ReOPD for agentic tasks}. \textbf{Up:} OPD with online environment. The environment is always alive during training. And all steps equally contribute to the loss. \textbf{Down:} ReOPD with offline environment. The environment is only alive for collecting the teacher's trajectories, which can happen during the training of the teacher agent by using RL, like GRPO. Afterwards, the environment is not needed for the training of the student agent. The earlier steps contribute more to the loss.}
  \label{fig:overview}
  \vspace{-0mm}
\end{figure}

\vspace{-0.em}
\begin{figure}[t]
    \centering
    \includegraphics[width=0.99\linewidth]{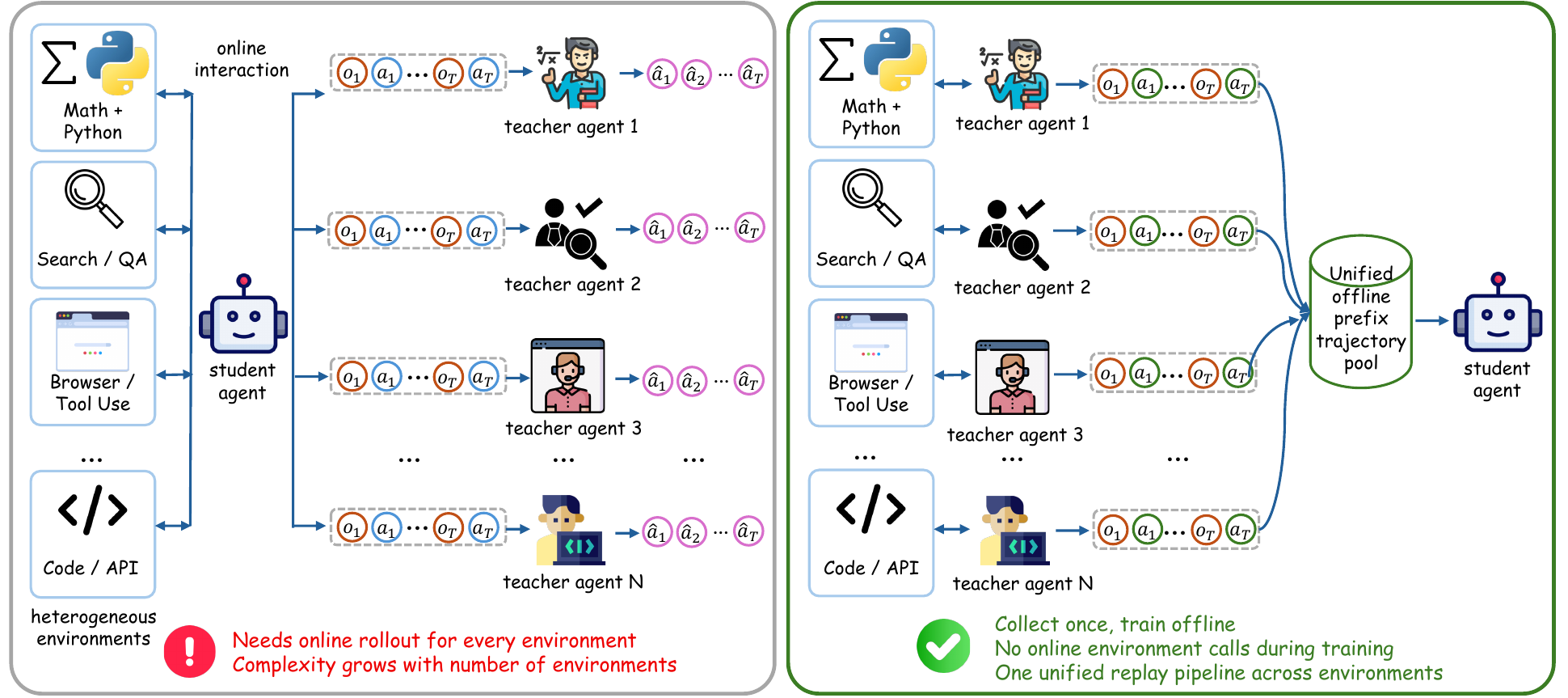}
  \caption[]{\textbf{Training a shared student agent on multiple heterogeneous environments with OPD and ReOPD.} \textbf{Left:} With an increasing number of environments, the operational complexity grows for OPD due to the heavy deployment of the environments. \textbf{Right:} ReOPD doesn't require the deployment of all environments at the same time. The teacher's trajectories can be collected separately for different environments, and then be merged into a unified pool for the later training of the student agent without any online environment.}
  \label{fig:overview_multi_envs}
  \vspace{-0mm}
\end{figure}

Reinforcement learning (RL) has become a central tool for eliciting strong
reasoning and agentic behavior from large language models (LLMs). Policy-gradient
methods such as PPO~\citep{schulman2017proximal} and GRPO~\citep{shao2024deepseekmath},
together with verifiable rewards, drive much of the recent progress on
mathematical reasoning and tool use~\citep{deepseekai2025deepseekr1incentivizingreasoningcapability,yu2025dapo}.
A defining property of RL is that it is \emph{on-policy}: the model is trained on
trajectories sampled from itself, so it learns to recover from its own mistakes
rather than from mistakes made by some other distribution. However, RL supervision
is sparse: a scalar reward conveys only a few bits per episode, regardless of how
many tokens were generated~\citep{thinkingmachines2025onpolicy}, which makes it
sample-inefficient, particularly for small models.

Knowledge distillation offers a much denser learning signal by training a student
to imitate a stronger teacher~\citep{hinton2015distilling,kim2016sequence}. The
standard recipe is \emph{off-policy}: the student is supervised on teacher-generated
trajectories via sequence-level distillation or supervised fine-tuning. While
data-efficient, off-policy distillation trains the student on prefixes visited by
the \emph{teacher}, not by the student itself. When the student later makes an early
mistake the teacher would not, it drifts into prefixes never seen during training,
and errors compound over the sequence~\citep{ross2011reduction,ranzato2016sequence}.
\emph{On-policy distillation} (OPD) addresses this by sampling prefixes from the
student while still distilling against the teacher's per-token
distribution~\citep{gu2024minillm,agarwal2024gkd,thinkingmachines2025onpolicy},
combining the relevance of on-policy data with the density of teacher supervision.

These ideas were largely developed for single-turn generation. Modern LLMs,
however, increasingly tackle \emph{agentic tasks}, operating as \emph{agents} that
interact with external environments over multiple turns, interleaving actions such as code execution,
retrieval, or search engine calls with the environment
feedback~\citep{yao2022react,schick2023toolformer,nakano2021webgpt,jin2025search}.
Coupling this multi-turn structure with teacher distillation gives \emph{multi-turn OPD}:
at each turn, the student rolls into a history, and the teacher supplies an
improvement target. Realized online, this is expensive -- every update must roll the student through the environment and query the teacher afresh at each visited
history, incurring environment-interaction and teacher-inference cost at every
step. A far more efficient alternative -- \emph{prefix replay} -- reuses trajectories the teacher has already
produced (Figure~\ref{fig:overview}): the student is rolled in on a teacher-recorded prefix and generates its
own action at the evaluated step, the prefix and its observations are taken from the
trace instead of executing actions, and the teacher supplies the per-step target.
The student thus stays on-policy at the supervised step -- but not along the
teacher-forced prefix itself -- while the environment is never queried; ReOPD therefore
trades a fully on-policy roll-in for reusability, a tension we make precise as a
two-sided distribution shift in Section~\ref{sec:formulation}. Crucially, such trajectories are not an added cost: when the teacher
is itself trained with RL, its on-policy rollouts are already collected during
training, so the prefix pool comes for free and distillation requires no extra
environment interaction. This decoupling compounds when a single student must learn
from several heterogeneous environments: instead of deploying all of them
simultaneously, ReOPD collects each teacher's trajectories separately and merges them
into one offline pool for student training (Figure~\ref{fig:overview_multi_envs}).
Figure~\ref{fig:overall} illustrates the payoff: ReOPD retains the
accuracy benefits of OPD while sharply reducing per-step training time and tool
calls. This motivates the question we study:
\emph{How should multi-turn OPD be done when the environment is
replaced by replayed teacher prefixes rather than fresh online rollouts?} In
this setting, prefixes act as implicit states and multi-turn generation induces
an implicit trajectory distribution over them.

Training offline in this way does not remove the temporal structure of multi-turn
learning, and we identify a \emph{prefix trap} with two layers. The first is
\emph{temporal}: prefix errors compound across steps, so the problem stays
sequential. The second is \emph{distributional} -- a \emph{two-sided shift}:
Making prefixes fully student-on-policy keeps them relevant to the student (a
student occupancy shift) but can query the teacher where its condition is no
longer a trustworthy target (a teacher reliability shift). A simple decomposition
bounds the gap to an ideal interactive objective by exactly these two terms.
Multi-turn OPD is therefore not just ``make distillation on-policy'' but
\emph{reliability-aware prefix distribution design}: choosing, at each step, an
effective prefix distribution between student and teacher occupancies; a concrete
sampling or weighting schedule is an implementation.

This view predicts a regime-aware outcome that our experiments confirm: OPD is already
strong when the teacher stays reliable on student-induced histories, whereas leaning on teacher trajectories
and down-weighting high-shift late steps wins when the teacher--student gap is large, even though it is less
on-policy.

\paragraph{Contributions.} Our contributions are as follows.
\begin{itemize}[leftmargin=1.5em,itemsep=2pt,topsep=2pt]
  \item We study \emph{efficient} Replayed-Prefix On-Policy Distillation that reuses a
  fixed pool of offline teacher trajectories, and surface the \emph{prefix trap},
  separating its temporal (compounding-error) and distributional (two-sided shift)
  layers.
  \item We formulate multi-turn OPD as a two-sided distribution-shift problem and
  derive a bound that decomposes the objective gap into a student
  occupancy-mismatch term and a teacher reliability term, showing why fully
  student-on-policy distillation is not automatically optimal.
  \item We recast multi-turn OPD as a reliability-aware prefix distribution design, with
  a geometric bridge between student and teacher occupancies, and show that a simple
  step-decay schedule -- applied by sampling prefixes -- is a practical implementation
  of the resulting effective distribution.
  \item We validate the resulting method, \emph{Replayed-Prefix On-Policy Distillation} (ReOPD), on mathematical reasoning and search environments,
  across single- and multi-environment settings and teacher and student models of varying
  scale. ReOPD improves over off-policy distillation (SFT) where reported; relative to
  strong on-policy OPD, and consistent with our two-regime analysis, it improves on
  mathematical reasoning when the teacher-student gap is wide and essentially matches
  OPD when the teacher is already reliable on the student histories (search/QA).
\end{itemize}

%% file: sections/related.tex
\section{Related Work}

\paragraph{Knowledge distillation for language models.}
Knowledge distillation transfers a teacher's behavior into a smaller
student~\citep{hinton2015distilling}, and sequence-level distillation extends this idea to autoregressive generation by training on teacher trajectories~\citep{kim2016sequence}. It underlies many strong small instructions and
reasoning models trained from teacher-written solutions or
rationales~\citep{mitra2024orca,yue2023mammoth,yu2023metamath,hsieh2023distilling}.
The standard recipe is efficient but off-policy: the student learns from the teacher
prefixes, then must recover from its own prefixes at inference. On-policy distillation addresses this by querying the teacher on student-sampled prefixes:
MiniLLM uses reverse KL~\citep{gu2024minillm}, GKD mixes student and teacher
generations~\citep{agarwal2024gkd}, DistiLLM refines the divergence and replay
design~\citep{ko2024distillm,ko2025distillm2}, speculative distillation interleaves
teacher and student tokens~\citep{xu2024speculativekd}, and recent practice frames
OPD as dense supervision compared with sparse RL~\citep{thinkingmachines2025onpolicy}.
Recent extensions study black-box OPD and on-policy context
distillation~\citep{ye2025blackboxopd,ye2026opcd}.
By analogy, a stronger teacher is not always a better teacher~\citep{xu2024stronger};
there the concern is teacher and data selection for single-turn instruction tuning,
whereas our reliability notion concerns \emph{where} the teacher's per-step conditional
is queried, so we take it as a suggestive parallel rather than direct support. We extend this line from single-turn generation to
multi-turn training from a fixed teacher-trajectory pool, where the key object is not only the teacher target but the prefix distribution on which it is
queried.

\paragraph{RL algorithms for LLM post-training.}
RL post-training optimizes scalar feedback rather than teacher conditionals. RLHF
fits preference rewards, often Bradley--Terry models~\citep{bradley1952rank}, and
optimizes them with PPO~\citep{schulman2017proximal}, as in instruction-following and
helpful-harmless assistants~\citep{ouyang2022training,bai2022training}. Direct
preference objectives simplify this pipeline by replacing online reward optimization
with contrastive or implicit-reward losses such as SLIC-HF~\citep{zhao2023slic},
DPO~\citep{rafailov2023direct}, IPO~\citep{azar2023general}, and
GPO~\citep{tang2024generalized}, with iterative and online variants studied under
KL-regularized preference learning~\citep{xiong2024iterative,dong2024rlhf}. For
reasoning, o1/DeepSeek-R1-style systems increasingly use verifiable or process
rewards~\citep{jaech2024openai,deepseekai2025deepseekr1incentivizingreasoningcapability,wang2023mathshepherd,zhang2024entropyregularizedprocessrewardmodel},
giving rise to scalable methods such as GRPO~\citep{shao2024deepseekmath} and
DAPO~\citep{yu2025dapo}. Critic-free variants revisit
REINFORCE~\citep{williams1992simple}, including ReMax~\citep{li2023remax},
RLOO-style baselines~\citep{ahmadian2024back,kool2019buy}, and
Reinforce-Rej~\citep{xiong2025minimalist}; RAFT and rejection sampling are the
binary-reward limit, where successful trajectories are selected and
imitated~\citep{dong2023raft,liu2023statistical,xiong2025minimalist}. ReOPD is
orthogonal to this optimizer family: it keeps dense teacher supervision and asks
which replayed prefixes make that supervision reliable.

\paragraph{Reasoning and agentic LLMs.}
RL-style post-training also extends to agents that interact with tools and
environments~\citep{yao2022react,schick2023toolformer,nakano2021webgpt}, including
search-augmented agents~\citep{jin2025search} and tool-integrated mathematical
reasoning~\citep{gou2023tora,mint2024a}. Other work improves reasoning through
self-hinting~\citep{liao2026selfhinting}, budget-aware elastic
reasoning~\citep{xu2025elasticreasoning}, online experiential
learning~\citep{ye2026onlineexperiential}, or distills teacher-generated
reasoning-and-acting traces into smaller agents~\citep{chen2023fireact,chen2024agentflan}.
These works motivate our setting but optimize different signals: scalar rewards,
advantages, filtered rollouts, or fixed teacher trajectories. We retain the dense
conditional target of distillation while avoiding live environment interaction, so the central design variable becomes which replayed prefixes should carry training
mass.

\paragraph{Self-training and rejection sampling.}
Self-training methods repeatedly turn model samples into new training data.
Expert-iteration methods imitate successful search or sampling
solutions~\citep{anthony2017thinking}, STaR bootstraps from self-generated
rationales~\citep{zelikman2022star}, and ReST-style methods alternate generation,
scalar-feedback filtering, and retraining~\citep{gulcehre2023reinforced,singh2023beyond}.
In LLM post-training, rejection sampling and RAFT instantiate the same principle by
imitating reward-ranked trajectories~\citep{dong2023raft,liu2023statistical,xiong2025minimalist},
with refinements for variance, curricula, and agentic
self-improvement~\citep{yao2025optimizing,zhao2024automatic,aksitov2023rest};
reflective methods revise trajectories from feedback~\citep{shinn2024reflexion}. ReOPD also reuses
offline traces, but the trace is not merely accepted or rejected: it becomes a site
where a teacher's condition is queried, so the effective prefix distribution is the
core variable.

\paragraph{Distribution shift and compounding errors.}
The prefix trap inherits the classic sequential-learning problem: behavior cloning on expert states suffers covariate shift once the learner deviates, which DAgger
addresses by training on the learner's induced state distribution~\citep{ross2011reduction}.
Autoregressive generation has the same exposure-bias structure; scheduled sampling,
sequence-level training, and Professor Forcing reduce the mismatch between
teacher-forced training and free-running inference~\citep{bengio2015scheduled,ranzato2016sequence,lamb2016professor}.
LLM alignment revisits this tension at scale: on-policy, even suboptimal, samples can
help preference fine-tuning~\citep{tajwar2024preference}, but on-policy data is not
uniformly best and depends on the alignment stage~\citep{sun2025onpolicydata}. Recent reasoning distillation similarly identifies a dual exposure bias: teacher-forced
traces mismatch student inference, while fully student-generated contexts can make
teacher intervention unreliable~\citep{wang2026backtracking}. Our formulation makes this duality explicit as a two-sided shift: student relevance pulls prefixes toward
the learner, while teacher reliability pulls them back toward the teacher-supported
region.

%% file: sections/formulation.tex
\section{Problem Formulation and Analysis}
\label{sec:formulation}

We formulate multi-turn on-policy distillation in an interactive environment.
At each OPD update, let \(\pi_{\theta_{\mathrm{old}}}\) denote the current
student policy used for data collection, and let \(\pi_\theta\) denote the
student policy being optimized. For each input \(x\sim\mathcal D\), the agent
interacts with an environment over \(T\) decision steps.

At interaction step \(t\), the agent observes an interaction history
\(H_t=(O_1,A_1,O_2,A_2,\ldots,A_{t-1},O_t)\), where \(O_t\) is the current
observation and \(A_s\) is the action taken at step \(s\). Given \(H_t\), a policy
\(\pi\) chooses an action \(A_t\sim\pi(\cdot\mid x,H_t)\), and the environment returns
the next observation \(O_{t+1}\sim\mathcal E(\cdot\mid x,H_t,A_t)\). Thus a step is an
interaction step consisting of a policy action followed by an environment response,
rather than merely a token-generation step.

For any policy \(\pi\), the policy-environment system induces a step-\(t\) history
occupancy distribution
\(d_{\pi,\mathcal E}^{t}(h_t\mid x)=\Pr_{\pi,\mathcal E}(H_t=h_t\mid x)\), which
factorizes recursively: if \(h_{t+1}=(h_t,a_t,o_{t+1})\), then
\[
d_{\pi,\mathcal E}^{t+1}(h_{t+1}\mid x)
=
d_{\pi,\mathcal E}^{t}(h_t\mid x)\,
\pi(a_t\mid x,h_t)\,
\mathcal E(o_{t+1}\mid x,h_t,a_t).
\]
In particular, the current student \(\pi_{\theta_{\mathrm{old}}}\) induces the
student interaction occupancy
\(d_{\theta_{\mathrm{old}}}^{t}\equiv d_{\pi_{\theta_{\mathrm{old}}},\mathcal E}^{t}\),
while the teacher \(\pi_T\) induces the teacher interaction occupancy
\(d_T^{t}\equiv d_{\pi_T,\mathcal E}^{t}\). Once a history \(h_t\) is collected, the
teacher is queried for a target action distribution \(\pi_T(\cdot\mid x,h_t)\).
Therefore, in interactive OPD the design choice is not only which target
distribution to distill from, but also which interaction histories to query the
teacher on.

\paragraph{Ideal interactive improvement objective.}

Let \(q_t^\star(\cdot\mid x,h_t)\) denote the ideal improvement target at interaction
step \(t\) and history \((x,h_t)\). The ideal objective evaluates the updated student
\(\pi_\theta\) on histories that the current student \(\pi_{\theta_{\mathrm{old}}}\)
will actually encounter under the environment interaction:
\begin{equation}
\label{eq:ideal}
\mathcal R^\star(\theta;\theta_{\mathrm{old}})
=
\mathbb E_{x\sim\mathcal D}
\left[
\sum_{t=1}^{T}
\alpha_t\,
\mathbb E_{H_t\sim d_{\theta_{\mathrm{old}}}^{t}(\cdot\mid x)}
\left[
\ell\left(
\pi_\theta(\cdot\mid x,H_t),
q_t^\star(\cdot\mid x,H_t)
\right)
\right]
\right],
\end{equation}
where \(\alpha_t\ge 0\) controls the relative importance of different
interaction steps; unless a schedule is specified we take \(\alpha_t\equiv 1\)
(uniform across steps), so that ReOPD's reliability signal -- including its step-decay --
is carried entirely by the weight \(w_t\) (Section~\ref{sec:aopd}), not by
\(\alpha_t\). This objective captures student relevance: the student should
improve on histories generated by its own interaction with the environment.

In practice, \(q_t^\star\) is unavailable. OPD therefore replaces it with the
teacher distribution \(\pi_T(\cdot\mid x,h_t)\). However, the teacher is now
queried on histories generated by a particular roll-in process. If those
histories are far from the teacher's reliable interaction support, the teacher's
action distribution may not be a trustworthy improvement signal.

\paragraph{From online interaction to offline reuse.}

The occupancies \(d_{\pi,\mathcal E}^{t}\) and the ideal objective
\(\mathcal R^\star\) are defined through environment interaction, but they serve
only as \emph{conceptual targets}. Realizing them online would require, at every
update, rolling the current student through the environment and querying the
teacher afresh at each visited history -- precisely the per-step environment and
teacher cost we wish to avoid. We instead train off-environment from a fixed pool
of pre-collected teacher trajectories: at each recorded history \(h_t\) the
student proposes an action and the teacher conditional
\(\pi_T(\cdot\mid x,h_t)\) supplies the target, while no environment query is
needed because the prefix and observations are replayed from the trace.
Consequently the roll-in is the teacher pool, so the collected histories
follow \(\mathcal P_t\approx d_T^{t}\) rather than the student occupancy
\(d_{\theta_{\mathrm{old}}}^{t}\). The student occupancy that the ideal objective
targets is therefore \emph{never sampled}; obtaining it is exactly what
online interaction would buy. This is the crux of the offline setting: we must
approximate an objective defined over \(d_{\theta_{\mathrm{old}}}^{t}\) using
only samples from \(d_T^{t}\), which is what the reweighting below accomplishes.

\paragraph{Collected and effective history distributions.}

Write \(\mathcal P_t(\cdot\mid x)\) for the roll-in distribution histories are collected
from; in the offline setting it is the fixed teacher pool, so
\(\mathcal P_t\approx d_T^{t}\). A general weighted OPD objective is
\begin{equation}
\label{eq:weighted}
\mathcal L_{\mathcal P,w}(\theta;\theta_{\mathrm{old}})
=
\mathbb E_{x\sim\mathcal D}
\left[
\sum_{t=1}^{T}
\alpha_t\,
\mathbb E_{H_t\sim \mathcal P_t(\cdot\mid x)}
\left[
w_t(x,H_t)\,
\ell\left(
\pi_\theta(\cdot\mid x,H_t),
\pi_T(\cdot\mid x,H_t)
\right)
\right]
\right],
\end{equation}
where the weights are normalized per \((x,t)\), i.e.\
\(\mathbb E_{H_t\sim \mathcal P_t(\cdot\mid x)}[w_t(x,H_t)]=1\), so that \(w_t\) reshapes
\(\mathcal P_t\) into the per-step distribution \(\rho_t\); the position-only schedule of
Section~\ref{sec:aopd}, constant across histories at each \(t\), is instead normalized
across positions, as detailed there. Then \(w_t\) induces
an effective interaction-prefix distribution
\(\rho_t(h_t\mid x)=w_t(x,h_t)\,\mathcal P_t(h_t\mid x)\), and the objective becomes
\begin{equation}
\label{eq:rho-obj}
\mathcal L_{\rho}(\theta;\theta_{\mathrm{old}})
=
\mathbb E_{x\sim\mathcal D}
\left[
\sum_{t=1}^{T}
\alpha_t\,
\mathbb E_{H_t\sim \rho_t(\cdot\mid x)}
\left[
\ell\left(
\pi_\theta(\cdot\mid x,H_t),
\pi_T(\cdot\mid x,H_t)
\right)
\right]
\right].
\end{equation}
Thus the central design problem is not the raw weight \(w_t\), but the effective
history distribution \(\rho_t\).

\paragraph{Two sources of mismatch.}

The practical objective differs from the ideal interactive improvement objective
for two reasons. First, the effective history distribution \(\rho_t\) may differ
from the student-environment occupancy \(d_{\theta_{\mathrm{old}}}^{t}\).
Second, even when histories are relevant to the student, the teacher may be
unreliable on histories far from its own interaction support.

Define the step-wise teacher reliability error
\[
\epsilon_{T,t}^{\theta}(x,h_t)
=
\left|
\ell\left(
\pi_\theta(\cdot\mid x,h_t),
q_t^\star(\cdot\mid x,h_t)
\right)
-
\ell\left(
\pi_\theta(\cdot\mid x,h_t),
\pi_T(\cdot\mid x,h_t)
\right)
\right|.
\]
\begin{assumption}[Bounded improvement loss]
\label{asm:bounded}
The loss against the ideal target is uniformly bounded: there exists a constant
\(B<\infty\) such that
\(\bigl|\ell\left(\pi_\theta(\cdot\mid x,h_t),q_t^\star(\cdot\mid x,h_t)\right)\bigr|\le B\)
for all \((x,h_t,t)\).
\end{assumption}

Assumption~\ref{asm:bounded} is mild and covers the losses we use. It holds
automatically for any bounded divergence -- e.g.\ total variation (\(B=1\)) or
Jensen--Shannon (\(B=\log 2\)). For the per-token KL used in
Section~\ref{sec:aopd}, it holds whenever the target conditional is bounded below on
its support, \(q_t^\star(a\mid x,h_t)\ge p_{\min}>0\), since then
\(D_{\mathrm{KL}}(\pi_\theta\,\|\,q_t^\star)\le\log(1/p_{\min})=:B\); a softmax with
bounded logits (equivalently a temperature or label-smoothing floor) guarantees such
a \(p_{\min}\). Only the ideal-target loss needs to be bounded: the teacher-target
loss enters the bound solely through the difference \(\epsilon_{T,t}^{\theta}\).

\begin{proposition}[Two-sided decomposition]
\label{prop:bound}
Recall the ideal objective
\(\mathcal R^\star(\theta;\theta_{\mathrm{old}})\)~(\plaineqref{eq:ideal}), which scores
\(\pi_\theta\) against the ideal target \(q_t^\star\) on the student occupancy
\(d_{\theta_{\mathrm{old}}}^{t}\), and the replayed objective
\(\mathcal L_\rho(\theta;\theta_{\mathrm{old}})\)~(\plaineqref{eq:rho-obj}), which scores
\(\pi_\theta\) against the teacher \(\pi_T\) on the effective prefix distribution
\(\rho_t=w_t\,\mathcal P_t\). Under Assumption~\ref{asm:bounded} (with bound \(B\)),
\begin{equation}
\label{eq:bound}
\left|
\mathcal R^\star(\theta;\theta_{\mathrm{old}})
-
\mathcal L_\rho(\theta;\theta_{\mathrm{old}})
\right|
\le
\mathbb E_{x\sim\mathcal D}
\left[
\sum_{t=1}^{T}
\alpha_t
\left\{
2B\,
\mathrm{TV}
\left(
d_{\theta_{\mathrm{old}}}^{t}(\cdot\mid x),
\rho_t(\cdot\mid x)
\right)
+
\mathbb E_{H_t\sim \rho_t(\cdot\mid x)}
\left[
\epsilon_{T,t}^{\theta}(x,H_t)
\right]
\right\}
\right].
\end{equation}
Here \(\mathrm{TV}\) is the total-variation distance, \(\alpha_t\ge0\) the step
coefficients of~(\plaineqref{eq:ideal}), and
\(\epsilon_{T,t}^{\theta}(x,h_t)=\bigl|\ell(\pi_\theta,q_t^\star)-\ell(\pi_\theta,\pi_T)\bigr|\)
the step-wise teacher-reliability error, the gap between the ideal- and teacher-target
losses at \((x,h_t)\).
\end{proposition}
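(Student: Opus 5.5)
The plan is a triangle inequality through an intermediate objective that scores \(\pi_\theta\) against the ideal target \(q_t^\star\) but on the effective prefix distribution \(\rho_t\):
\[
\widetilde{\mathcal R}_\rho(\theta;\theta_{\mathrm{old}})
=
\mathbb E_{x\sim\mathcal D}\Bigl[\sum_{t=1}^{T}\alpha_t\,
\mathbb E_{H_t\sim\rho_t(\cdot\mid x)}\bigl[\ell\bigl(\pi_\theta(\cdot\mid x,H_t),q_t^\star(\cdot\mid x,H_t)\bigr)\bigr]\Bigr].
\]
We then write \(\mathcal R^\star-\mathcal L_\rho=(\mathcal R^\star-\widetilde{\mathcal R}_\rho)+(\widetilde{\mathcal R}_\rho-\mathcal L_\rho)\). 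The first difference only changes the history distribution, and the second only changes the target. Since \(\alpha_t\ge0\), we move absolute values inside \(\mathbb E_x\) and the sum over \(t\), and bound each \((x,t)\) summand separately.

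\textbf{Distribution-shift term.} Fix \((x,t)\) and set \(f(h)=\ell(\pi_\theta(\cdot\mid x,h),q_t^\star(\cdot\mid x,h))\). Assumption~\ref{asm:bounded} gives \(|f|\le B\). We first need \(\rho_t(\cdot\mid x)\) to be a probability distribution, which follows from the per-\((x,t)\) normalization \(\mathbb E_{\mathcal P_t}[w_t]=1\) together with \(w_t\ge0\). The standard estimate \(|\mathbb E_p f-\mathbb E_q f|=|\sum_h f(h)(p(h)-q(h))|\le \|f\|_\infty\,\|p-q\|_1=2B\,\mathrm{TV}(p,q)\), with \(\mathrm{TV}=\tfrac12\|\cdot\|_1\), then yields the term \(2B\,\mathrm{TV}(d^t_{\theta_{\mathrm{old}}},\rho_t)\). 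Using the sharper \(\sup f-\inf f\le 2B\) constant would also work, but the crude bound already matches the statement.

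\textbf{Reliability term.} Both expectations are now under \(\rho_t\), so
\[
\bigl|\mathbb E_{\rho_t}[\ell(\pi_\theta,q_t^\star)-\ell(\pi_\theta,\pi_T)]\bigr|\le \mathbb E_{\rho_t}\bigl[\epsilon^\theta_{T,t}\bigr]
\]
by Jensen, i.e.\ \(|\mathbb E Z|\le \mathbb E|Z|\). This holds directly by the definition of \(\epsilon^\theta_{T,t}\). Notably, the teacher loss never needs to be bounded here, consistent with the remark after the assumption. Summing over \(t\) with weights \(\alpha_t\) and taking \(\mathbb E_x\) gives \eqref{eq:bound}.

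\textbf{Main obstacle.} The algebra is routine; the main obstacle is measure-theoretic. The decomposition needs \(\widetilde{\mathcal R}_\rho\) to be finite and \(\mathcal L_\rho\) to be well defined, so that the subtraction is legitimate. Boundedness of \(f\) handles the former. For the latter I would assume \(\mathbb E_{\rho_t}[\epsilon^\theta_{T,t}]<\infty\), since otherwise the bound is trivial. For the position-only schedule, where the normalization is across positions rather than per \((x,t)\), the \(\alpha_t\)-weighting must be absorbed consistently before each \(\rho_t\) is treated as a probability measure. I would state this explicitly as the setting in which the per-\((x,t)\) argument applies.
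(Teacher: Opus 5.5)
Your proposal is correct and matches the paper's proof: the paper also adds and subtracts \(\mathbb E_{\rho_t}[f_t]\) at each \((x,t)\) (your intermediate objective \(\widetilde{\mathcal R}_\rho\)), bounds the occupancy-shift piece by \(\|f_t\|_\infty\sum_h|d^t_{\theta_{\mathrm{old}}}-\rho_t|=2B\,\mathrm{TV}\), and bounds the reliability piece via \(|\mathbb E Z|\le\mathbb E|Z|\). Your integrability and normalization caveats, including that the horizon-normalized step-decay weight does not make each \(\rho_t\) a per-\((x,t)\) probability measure, are points the paper leaves implicit rather than a different method.
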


\begin{proof}
Fix \(x\) and \(t\), and abbreviate the ideal-target and teacher-target losses by
\(f_t(h)=\ell(\pi_\theta(\cdot\mid x,h),q_t^\star(\cdot\mid x,h))\) and
\(g_t(h)=\ell(\pi_\theta(\cdot\mid x,h),\pi_T(\cdot\mid x,h))\), so that
\(\mathcal R^\star=\mathbb E_{x}\bigl[\sum_t\alpha_t\,\mathbb E_{d_{\theta_{\mathrm{old}}}^{t}}[f_t]\bigr]\)
and \(\mathcal L_\rho=\mathbb E_{x}\bigl[\sum_t\alpha_t\,\mathbb E_{\rho_t}[g_t]\bigr]\).
Adding and subtracting \(\mathbb E_{\rho_t}[f_t]\),
\[
\mathbb E_{d_{\theta_{\mathrm{old}}}^{t}}[f_t]-\mathbb E_{\rho_t}[g_t]
=
\underbrace{\bigl(\mathbb E_{d_{\theta_{\mathrm{old}}}^{t}}[f_t]-\mathbb E_{\rho_t}[f_t]\bigr)}_{\text{occupancy shift}}
+
\underbrace{\bigl(\mathbb E_{\rho_t}[f_t]-\mathbb E_{\rho_t}[g_t]\bigr)}_{\text{teacher reliability}}.
\]
For the occupancy-shift term, Assumption~\ref{asm:bounded} gives
\(\|f_t\|_\infty\le B\), and since
\(\sum_h\bigl|d_{\theta_{\mathrm{old}}}^{t}(h\mid x)-\rho_t(h\mid x)\bigr|=2\,\mathrm{TV}(d_{\theta_{\mathrm{old}}}^{t},\rho_t)\),
\[
\bigl|\mathbb E_{d_{\theta_{\mathrm{old}}}^{t}}[f_t]-\mathbb E_{\rho_t}[f_t]\bigr|
\le
\|f_t\|_\infty\sum_h\bigl|d_{\theta_{\mathrm{old}}}^{t}(h\mid x)-\rho_t(h\mid x)\bigr|
\le
2B\,\mathrm{TV}\bigl(d_{\theta_{\mathrm{old}}}^{t}(\cdot\mid x),\rho_t(\cdot\mid x)\bigr).
\]
For the teacher-reliability term, Jensen's inequality and the definition of
\(\epsilon_{T,t}^{\theta}\) give
\(\bigl|\mathbb E_{\rho_t}[f_t-g_t]\bigr|\le\mathbb E_{\rho_t}[|f_t-g_t|]=\mathbb E_{\rho_t}[\epsilon_{T,t}^{\theta}]\).
Applying the triangle inequality, multiplying by \(\alpha_t\ge0\), summing over \(t\),
and taking \(\mathbb E_{x\sim\mathcal D}\) establishes~(\plaineqref{eq:bound}).
\end{proof}
The first term is the student interaction-occupancy mismatch: it is small when the
effective training histories match the histories that the current student will
encounter through environment interaction. The second term is the teacher
reliability mismatch: it is small when the teacher is queried on histories where its
action distribution is a reliable proxy for the ideal improvement target.

Fully student-on-policy OPD (\(\rho_t=d_{\theta_{\mathrm{old}}}^{t}\) for all \(t\))
zeroes the occupancy term but not the reliability one: student-generated actions can
steer the environment into histories unlikely under the teacher's own interaction
process, especially at later steps where its conditional is a poorly calibrated
target. Teacher-forced histories (\(\rho_t=d_T^{t}\)) do the reverse -- reliable near
the teacher's own occupancy, but not where the student actually goes at test time.
Neither extreme is uniformly optimal, so the effective distribution should balance
student relevance against teacher reliability.

\paragraph{Reliability-aware interaction-prefix distribution.}

The preceding decomposition motivates choosing, for each interaction step, an
effective history distribution
\[
\rho_t^\star
\in
\arg\min_{\rho_t}
\left\{
\lambda_{\mathrm{stu},t}\,
D\left(
\rho_t(\cdot\mid x),
d_{\theta_{\mathrm{old}}}^{t}(\cdot\mid x)
\right)
+
\lambda_{\mathrm{tea},t}\,
\mathcal E_T(\rho_t;x)
\right\},
\]
where \(D\) measures mismatch to the current student's occupancy and
\(\mathcal E_T(\rho_t;x)\) measures expected teacher unreliability under \(\rho_t\).
Since the true reliability error is generally unobserved, we use teacher support as a
practical surrogate, giving the bridge objective
\begin{equation}
\label{eq:bridge-obj}
\rho_t^\star
\in
\arg\min_{\rho_t}
\left\{
\lambda_{\mathrm{stu},t}\,
D_{\mathrm{KL}}
\left(
\rho_t(\cdot\mid x)
\,\Vert\,
d_{\theta_{\mathrm{old}}}^{t}(\cdot\mid x)
\right)
+
\lambda_{\mathrm{tea},t}\,
D_{\mathrm{KL}}
\left(
\rho_t(\cdot\mid x)
\,\Vert\,
d_T^{t}(\cdot\mid x)
\right)
\right\}.
\end{equation}
When the supports overlap, the solution is the geometric bridge
\begin{equation}
\label{eq:bridge}
\rho_t^\star(h_t\mid x)
\propto
\left[
d_{\theta_{\mathrm{old}}}^{t}(h_t\mid x)
\right]^{\gamma_t}
\left[
d_T^{t}(h_t\mid x)
\right]^{1-\gamma_t},
\qquad
\gamma_t
=
\frac{\lambda_{\mathrm{stu},t}}{\lambda_{\mathrm{stu},t}+\lambda_{\mathrm{tea},t}}.
\end{equation}
When \(\gamma_t=1\) the bridge becomes fully student-on-policy; when \(\gamma_t=0\) it
becomes teacher-supported roll-in. Intermediate values select histories that are both
likely under the current student's environment interaction and not too far from the
teacher's reliable interaction support.

\paragraph{Two regimes.}

The bound~(\plaineqref{eq:bound}) sums two competing terms: the occupancy term is
minimized by \(\gamma_t\!\to\!1\) (fully student-on-policy) and the reliability term by
\(\gamma_t\!\to\!0\) (teacher-supported roll-in), so which one dominates sets the right
\(\gamma_t\). Crucially the balance is \emph{per step}: the teacher-reliability term
grows with depth, as roll-ins drift into histories the teacher rarely visits, so the
ideal \(\gamma_t\) \emph{decreases along the trajectory} -- student-relevant
(\(\gamma_t\!\to\!1\)) at early steps, teacher-supported (\(\gamma_t\!\to\!0\)) only at
the deep steps where the teacher signal has degraded. This yields two regimes. When the
teacher stays reliable on student-induced histories -- a capable teacher close to the
student, short horizons, or early steps -- the occupancy term dominates, \(\gamma_t\!\to\!1\)
is best, and ordinary student-on-policy OPD is already strong. When the teacher's signal
degrades off its own support -- a large teacher--student gap, long horizons, or later
steps -- lowering \(\gamma_t\) \emph{at those steps} reduces the gap even though the
resulting histories are \emph{less} student-on-policy, which is counterintuitive if one
reads OPD as merely ``make it on-policy''. Section~\ref{sec:aopd} approximates this
depth-dependent retreat without tuning \(\gamma_t\) directly -- shifting supervision
mass toward earlier steps rather than reshaping the roll-in within each step --
applying on the fixed teacher-pool roll-in a step-decay whose steepness grows with the
teacher--student gap.

\paragraph{From distribution to weight.}

When histories come from \(\mathcal P_t\), any effective distribution
\(\rho_t^\star\) is realized by the normalized importance weight
\(w_t^\star=\rho_t^\star/\mathcal P_t\) (with
\(\mathbb E_{H_t\sim\mathcal P_t}[w_t^\star]=1\)), which recovers the weighted
objective~(\plaineqref{eq:weighted}) and can be applied equivalently by reweighting or
by sampling prefixes in proportion to it. The weight is thus an implementation device,
not the conceptual core: the principle is to choose, at each step, an effective history
distribution that balances student relevance and teacher reliability. Because exact
density ratios are high-variance, the concrete step-decay schedule we use in
practice is developed in Section~\ref{sec:aopd}. After optimizing the weighted
objective the student is refreshed,
\(\theta_{\mathrm{new}}\leftarrow\arg\min_{\theta}\mathcal L_{\rho}(\theta;\theta_{\mathrm{old}})\),
and the next OPD iteration sets
\(\theta_{\mathrm{old}}\leftarrow\theta_{\mathrm{new}}\).

\paragraph{Interpretation.}
On-policy distillation is not automatically optimal merely because histories are
sampled from the student: the history distribution sets both where the student is
trained and where the teacher is asked to supervise, so when the teacher is unreliable
on student-induced histories a reliability-aware distribution close to both occupancies
is the safer target.

%% file: sections/aopd.tex
\section{Replayed-Prefix On-Policy Distillation}
\label{sec:aopd}

We now turn the principle of the previous section into a concrete algorithm,
\emph{Replayed-Prefix On-Policy Distillation} (ReOPD), realized off-environment from a fixed
pool of teacher trajectories. The analysis reduces
multi-turn OPD to a single design choice: at each step, pick an effective prefix
distribution \(\rho_t\) that balances student relevance against teacher
reliability. ReOPD realizes this with two ingredients:

\begin{enumerate}[leftmargin=1.6em,itemsep=2pt,topsep=2pt]
  \item \textbf{Teacher-forced prefix, student action, teacher supervision.} For a
  supervised step \(t\), the prefix \(h_t=(O_1,A_1,\dots,O_t)\) is taken
  \emph{entirely} from a pre-collected teacher trajectory -- every earlier action
  and every observation is the teacher's recorded one. At this prefix the
  \emph{student} produces its own action, and the teacher's recorded conditional
  \(\pi_T(\cdot\mid x,h_t)\) supplies the distillation target. Sweeping \(t\) over
  the trajectory covers all positions. Because observations are always replayed,
  no environment is ever queried.
  \item \textbf{Reliability-aware step schedule.} The student is on-policy only at the
  evaluated step, on top of a teacher-forced prefix. As the position moves deeper
  into the trajectory, the replayed teacher prefix becomes a higher-shift surrogate
  for the histories the student would have reached. A per-step schedule therefore
  emphasizes early, low-shift positions and de-emphasizes late, high-shift positions
  while keeping the teacher conditional as the distillation target.
\end{enumerate}

The key point is that on-policy supervision and off-environment training coexist:
the student's step-\(t\) action makes the target relevant to the student, while the
fully teacher-forced prefix keeps the roll-in within the teacher's occupancy
(\(\mathcal P_t\approx d_T^{t}\)) and removes any environment interaction. The step schedule
then controls how these teacher-supported histories stand in for the student occupancy
\(d_{\theta_{\mathrm{old}}}^{t}\).

\paragraph{Inputs and notation.}
We are given a teacher \(\pi_T\) and a pool \(\mathcal D_T=\{(x,h)\}\) of
trajectories generated by the teacher interacting with the environment, where each
\(h\) records the full interaction history including the environment observations.
In practice this pool requires no dedicated collection: when the teacher is trained
with on-policy RL, its training rollouts are exactly such trajectories, so
\(\mathcal D_T\) is a free by-product reused for distillation.

\paragraph{Off-environment prefix construction.}
For each supervised step \(t\), the prefix \(h_t=(O_1,A_1,\dots,O_t)\) is replayed
\emph{verbatim} from the teacher trace -- all earlier actions \(A_{<t}\) and all
observations \(O_{\le t}\) are the teacher's. The student only acts at step \(t\),
where it generates its own action
\(A_t=(a_t^1,\dots,a_t^{n_t})\sim\pi_{\theta_{\mathrm{old}}}(\cdot\mid x,h_t)\)
autoregressively and is supervised, \emph{per generated token}, by the teacher's
conditional along that action, \(\pi_T(\cdot\mid x,h_t,a_t^{<j})\). Because the
supervised token contexts \(a_t^{<j}\) are the student's own samples, the evaluated
step is genuinely on-policy even though the prefix is teacher-forced, and it enters the
loss below through these on-policy contexts. No action is ever executed against the
environment and no observation is ever generated, so no environment is queried. The
roll-in is thus the teacher occupancy, \(\mathcal P_t\approx d_T^{t}\); the two-sided
shift arises because these replayed prefixes are only a surrogate for the histories the
current student would reach, a mismatch that grows with depth and that the weight below
corrects by emphasizing the shared, low-shift portion of the trajectory.

\paragraph{From the bridge to a step-decaying weight.}
The bridge weight reshapes the collected roll-in \(\mathcal P_t\approx d_T^{t}\)
toward \(\rho_t^\star\). Substituting the bridge~(\plaineqref{eq:bridge}) gives a weight that
is a power of the student-to-teacher occupancy ratio,
\[
w_t(x,h_t)
\;\propto\;
\frac{\rho_t^\star(h_t\mid x)}{\mathcal P_t(h_t\mid x)}
\;=\;
\left(\frac{d_{\theta_{\mathrm{old}}}^{t}(h_t\mid x)}{d_T^{t}(h_t\mid x)}\right)^{\gamma_t}.
\]
Because the prefix \(h_t\) is teacher-recorded and both policies act on the
\emph{same} prefix, the environment factors cancel and only the model-induced part
remains, an accumulated student-to-teacher likelihood ratio over the prefix,
\[
\widehat r_t(x,h_t)
=
\prod_{s<t}
\frac{\pi_{\theta_{\mathrm{old}}}(a_s\mid x,h_s)}{\pi_T(a_s\mid x,h_s)}.
\]
This ratio is a genuine per-history quantity -- it varies across the histories at a
fixed step \(t\) -- so the weight that realizes the bridge inside the weighted
objective~(\plaineqref{eq:weighted}) is its per-\((x,t)\)-normalized power
\begin{equation}
\label{eq:exact-weight}
w_t(x,h_t)
=
\frac{\widehat r_t(x,h_t)^{\,\gamma_t}}
{\mathbb E_{H_t\sim\mathcal P_t(\cdot\mid x)}\!\left[\widehat r_t(x,H_t)^{\,\gamma_t}\right]},
\qquad
\mathbb E_{H_t\sim\mathcal P_t}[w_t]=1,
\end{equation}
which reshapes the pooled roll-in into the effective per-step distribution
\(\rho_t=w_t\,\mathcal P_t\) exactly as the general framework prescribes, upweighting
the histories the current student is relatively more likely to have produced. Because
\(\widehat r_t\) varies within a step this normalization is non-vacuous, so
Eq.~(\plaineqref{eq:exact-weight}) is ReOPD's exact, density-ratio realization of the
bridge; it is computable from the student and teacher log-probabilities on the
recorded prefix, at the price of a high-variance per-history ratio.

Each factor compares the student's and the teacher's probability of the teacher's
own recorded action \(a_s\): since the teacher selected \(a_s\), the ratio
\(\pi_{\theta_{\mathrm{old}}}(a_s)/\pi_T(a_s)\) is typically below one, so the product
\emph{shrinks} as the prefix lengthens. Equivalently, \(\log\widehat r_t\) is a
cumulative sum that becomes more negative with depth: the further the supervised
position sits in the trajectory, the less likely the current student would have
followed the same teacher prefix, and the larger the surrogate mismatch between the
pooled prefix and the student-relevant history distribution. In other words, the
reliability weight is, to first order, a monotone \emph{step-decaying} function of the
position index.

\begin{figure}[t]
    \centering
    \includegraphics[width=0.6\linewidth]{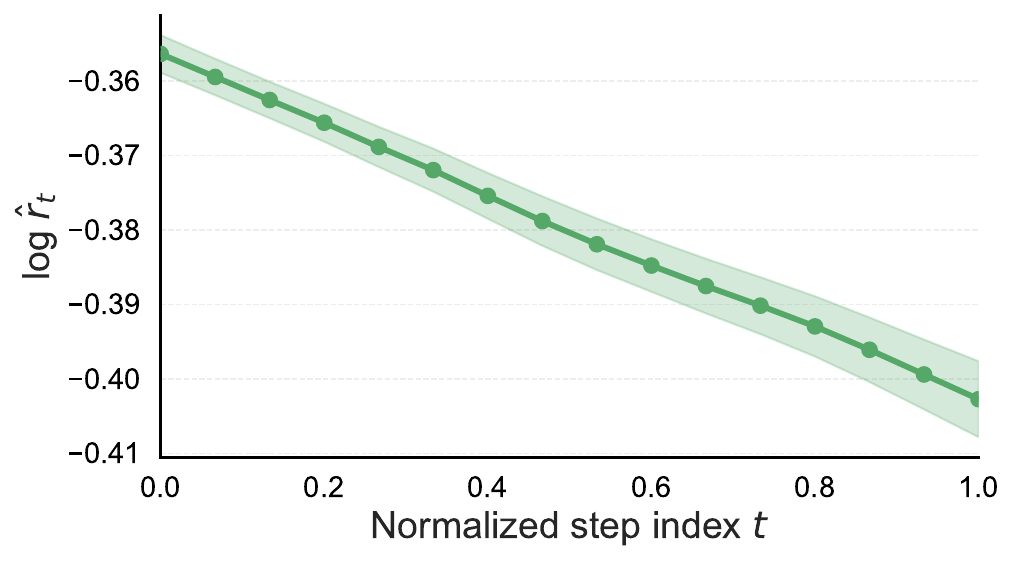}
    \vspace{-2mm}
  \caption[]{\textbf{Step index is a strong proxy for the likelihood-ratio weight.}
  For each supervised position we compute the per-position weight
  \(\widehat r_t=\prod_{s<t}\pi_{\theta_{\mathrm{old}}}(a_s)/\pi_T(a_s)\) -- the
  student-to-teacher likelihood ratio along the teacher prefix -- and plot it against
  the step index \(t\). The empirical weight decays with depth, because each
  factor compares the student's and teacher's probability of the teacher's own
  recorded action and is typically below one, so the product shrinks as the prefix
  lengthens. This makes step index a cheap proxy for the expensive prefix-level ratio
  and motivates replacing the explicit ratio with the one-parameter
  step-decay schedule \(\omega(t)=\kappa^{t}\) introduced below.}
  \label{fig:weight_decay}
  \vspace{-3mm}
\end{figure}

This holds empirically, not just to first order: measured directly along teacher
prefixes, \(\widehat r_t\) is tightly organized by step index
(Figure~\ref{fig:weight_decay}), so depth explains most of its variation and justifies
the design choice we make next -- replacing the expensive, high-variance per-prefix
ratio with a stable depth-based proxy.

Concretely, the surrogate keeps the monotone-decreasing \emph{shape} of the exact
weight~(\plaineqref{eq:exact-weight}) while dropping the noisy per-history factor
\(\widehat r_t\); both the shape and the right steepness follow from the exact weight
itself. Writing the average per-step teacher--student gap along the prefix as
\[
\bar c(x)
=
\mathbb E_{a_s\sim\pi_T}\!\left[
\log\frac{\pi_T(a_s\mid x,h_s)}{\pi_{\theta_{\mathrm{old}}}(a_s\mid x,h_s)}
\right]
\;\ge\;0
\qquad(\text{an average per-step } D_{\mathrm{KL}}(\pi_T\,\|\,\pi_{\theta_{\mathrm{old}}})),
\]
we have \(\log\widehat r_t\approx-(t-1)\,\bar c\), so the exact weight's depth profile
is geometric:
\begin{equation}
\label{eq:kappa-map}
\widehat r_t^{\,\gamma_t}\;\approx\;\kappa^{\,t}\ \text{(up to a constant)},
\qquad
\kappa=\exp\!\left(-\gamma_t\,\bar c\right)\in(0,1].
\end{equation}
This is the bridge-to-schedule map: the decay \emph{base} \(\kappa\) is not a free
prefactor but is pinned by the bridge exponent \(\gamma_t\) (how far \(\rho_t\) is
pushed toward the student occupancy) and the per-step gap \(\bar c\) (how fast the
teacher pool diverges from the student with depth), and shrinks -- steeper decay --
when either grows. Collapsing the profile to a single base \(\kappa\) treats the
per-step gap as approximately stationary across positions; when it instead grows with
depth, \(\kappa^{t}\) is a first-order (geometric-mean) approximation to the true
accumulated ratio, which Figure~\ref{fig:weight_decay} shows already tracks the measured
decay well. ReOPD therefore uses the one-parameter schedule
\[
\omega(t;\kappa)=\kappa^{\,t}
\qquad(\text{or any monotone decreasing schedule}),
\]
with \(\kappa\in(0,1]\) the single steepness knob; the overall scale cancels under
normalization, so -- unlike a raw power law -- there is no separate prefactor, and in
particular no scale symbol to collide with the bridge exponent \(\gamma_t\). This is the surrogate's reliability weight,
\(w_t=\omega(t;\kappa)=\kappa^{\,t}\). Because it depends only on the position index it
is constant across the histories at a fixed \(t\), so it is normalized \emph{over the
horizon} -- across positions, in proportion to \(\kappa^{\,t}\) -- rather than
per-\((x,t)\); a per-\((x,t)\) normalization would send a step-constant weight to
\(w_t\!\equiv\!1\) and erase the decay. This makes the trade-off between the two forms
explicit: both feed the same weighted objective~(\plaineqref{eq:weighted}) through the
effective distribution \(\rho_t\propto w_t\,\mathcal P_t\), but the exact
weight~(\plaineqref{eq:exact-weight}) varies \emph{within} each step (per-\((x,t)\)
normalization -- a genuine density ratio that reshapes histories at a fixed \(t\)),
whereas the step-decay surrogate varies \emph{across} steps (horizon normalization --
constant within a step) and so reallocates mass toward the early, low-shift positions.
They differ only in the axis along which \(w_t\) carries the reliability signal. The surrogate trades exactness for a single interpretable
knob: \(\kappa\!=\!1\) recovers uniform weighting across positions (no decay), and
smaller \(\kappa\) concentrates supervision on the early, more reliable steps. Both
forms point the same way, since \(\widehat r_t\) decays with depth; the surrogate is
the default we sweep in Section~\ref{sec:experiments}, realized by sampling positions
in proportion to it (equivalently, as a loss weight), as detailed next. More generally the weight can be
made data-dependent -- e.g.\ derived from the student's own sampling probabilities at
the evaluated step.

\paragraph{Weighted distillation objective.}
At each supervised step the student samples its own action
\(A_t=(a_t^1,\dots,a_t^{n_t})\sim\pi_{\theta_{\mathrm{old}}}(\cdot\mid x,H_t)\) and is
distilled against the teacher \emph{along that action}: the per-step loss is the
token-level KL accumulated over the student-sampled action tokens,
\[
\ell\big(\pi_\theta,\pi_T;x,H_t,A_t\big)
=
\sum_{j=1}^{n_t}
D_{\mathrm{KL}}\!\Big(
\pi_\theta(\cdot\mid x,H_t,a_t^{<j})
\,\big\|\,
\pi_T(\cdot\mid x,H_t,a_t^{<j})
\Big),
\]
whose conditioning contexts \(a_t^{<j}\) are on-policy, so the loss depends on the
student's action. The student then minimizes the reweighted step-wise objective
\[
\mathcal L(\theta)
=
\mathbb E_{x\sim\mathcal D}
\left[
\sum_{t=1}^{T}\alpha_t\,
\mathbb E_{H_t\sim\mathcal P_t(\cdot\mid x)}
\;
\mathbb E_{A_t\sim\pi_{\theta_{\mathrm{old}}}(\cdot\mid x,H_t)}
\Big[
w_t\,
\ell\big(\pi_\theta,\pi_T;x,H_t,A_t\big)
\Big]
\right].
\]
Here \(\alpha_t\) is the uniform base of Section~\ref{sec:formulation} and the
reliability signal is carried entirely by \(w_t\): the exact weight sets \(w_t\) by
Eq.~(\plaineqref{eq:exact-weight}) (per-\((x,t)\)-normalized), while the step-decay
surrogate sets \(w_t=\omega(t;\kappa)=\kappa^{t}\) (horizon-normalized). The effective
per-position mass is then \(\alpha_t\,w_t\,\mathcal P_t\propto w_t\,\mathcal P_t\), which
the next paragraph draws from directly.
After optimization the student is refreshed, \(\theta_{\mathrm{old}}\leftarrow\theta\),
and the next iteration re-evaluates the student's steps against the same teacher
prefixes, so supervision tracks the moving student while staying anchored to the
teacher pool.

\paragraph{From the objective to sampling.}
The design target is a \emph{distribution}, not a weight: objective~(\plaineqref{eq:rho-obj})
is the plain (unweighted) loss in expectation under \(\rho_t^\star\). Any unbiased
Monte-Carlo estimate of that expectation is a valid implementation, and there are two
natural ones.
\emph{(i) Sampling (direct).} Estimate~(\plaineqref{eq:rho-obj}) by drawing histories
from \(\rho_t^\star\) and averaging the \emph{unweighted} loss. Since the pool only
provides samples from \(\mathcal P_t\), we draw positions with
\(p_t\propto w_t\,\mathcal P_t=\rho_t^\star\) -- i.e.\ resample the pooled positions in
proportion to the step-decay schedule -- and apply the loss with no reweighting. This
is the direct estimator of~(\plaineqref{eq:rho-obj}): the schedule appears only in
\emph{which} positions are trained, matching how the offline pool is streamed and
touching only a subset of positions per step.
\emph{(ii) Weighting (importance).} Keep the pooled samples from \(\mathcal P_t\) and
instead multiply the loss by \(w_t\). The importance identity
\(\mathbb E_{H_t\sim\mathcal P_t}[w_t\,\ell]=\mathbb E_{H_t\sim\rho_t^\star}[\ell]\)
makes this the importance-weighted estimator of the \emph{same} expectation, at a
compute cost proportional to enumerating every position.
Sampling is therefore not an approximation of weighting: both are unbiased estimators
of the single distributional objective~(\plaineqref{eq:rho-obj}), with sampling the more
direct one. We use \emph{sampling} in all experiments and treat weighting as an
equivalent alternative.

\paragraph{Setting the decay.}
The only free knob is the steepness \(\kappa\). The map~(\plaineqref{eq:kappa-map})
makes it measurable: \(\kappa\approx\exp(-\gamma_t\bar c)\) is read off from the
per-step teacher--student gap \(\bar c\), an average
\(D_{\mathrm{KL}}(\pi_T\|\pi_{\theta_{\mathrm{old}}})\) estimated directly on the pool.
The weight always decreases with depth, so early, low-shift steps receive the most
mass and \(\kappa\) only sets how fast it falls off: \(\kappa\!=\!1\) is uniform
(OPD-like) and smaller \(\kappa\) concentrates supervision on the early steps. The two
regimes read off through \(\bar c\) -- a small gap gives \(\kappa\!\approx\!1\) and
gentle decay, a large gap (e.g.\ the wide teacher--student math setting) gives small
\(\kappa\) and steep decay -- and steepness is set empirically in
Section~\ref{sec:experiments}, entering only through the sampling probability
\(p_t\propto w_t\) (equivalently, a weight \(w_t\)).

\begin{algorithm}[t]
\caption{Replayed-Prefix On-Policy Distillation (ReOPD)}
\label{alg:aopd}
\begin{algorithmic}[1]
\Require teacher pool \(\mathcal D_T\), teacher \(\pi_T\), student \(\pi_\theta\),
step-decay schedule \(\omega(t;\kappa)=\kappa^{t}\)
\State initialize \(\theta_{\mathrm{old}}\leftarrow\theta\)
\For{each OPD iteration}
  \State assemble candidate positions \(\{(x,h_t)\}\) from pooled trajectories \Comment{prefix \(h_t\): teacher actions \(A_{<t}\), teacher observations \(O_{\le t}\)}
  \State draw positions with \(p_t\propto\omega(t;\kappa)\) \Comment{early, low-shift steps sampled more often}
  \For{each sampled position \((x,h_t)\)}
    \State student generates its action \(A_t=(a_t^1,\dots,a_t^{n_t})\sim\pi_{\theta_{\mathrm{old}}}(\cdot\mid x,h_t)\) \Comment{token contexts on-policy; no environment call}
    \State accumulate \(\sum_{j=1}^{n_t} D_{\mathrm{KL}}\!\big(\pi_\theta(\cdot\mid x,h_t,a_t^{<j})\,\|\,\pi_T(\cdot\mid x,h_t,a_t^{<j})\big)\) \Comment{per-token KL along the student's action \(A_t\)}
  \EndFor
  \State update \(\theta\) on the accumulated loss; \(\theta_{\mathrm{old}}\leftarrow\theta\)
\EndFor
\State \Return \(\pi_\theta\)
\end{algorithmic}
\end{algorithm}

\paragraph{Why this is more than ``make it on-policy''.}
Standard OPD pushes prefixes toward the student by re-rolling through a live
environment; ReOPD instead replays a teacher-forced prefix and corrects with a
reliability-aware step-decay schedule, achieving student relevance off-environment. The
principle is reliability-aware prefix distribution design, which we implement by
sampling prefixes along that schedule.

\paragraph{Relation to RL and distillation.}
ReOPD sits between reinforcement learning and distillation, and the bridge parameter
\(\gamma_t\) is precisely the slider between them. Like on-policy RL, the student
acts at the supervised step and is trained on its own actions, so the signal is
relevant to what the student will do; like distillation, the supervision is the
teacher's full per-step conditional \(\pi_T(\cdot\mid x,h_t)\) rather than a scalar
return. This is the sense in which on-policy distillation provides a \emph{dense}
improvement signal where RL provides a sparse one~\citep{thinkingmachines2025onpolicy}:
RL conveys a few bits per episode, whereas matching a distribution supervises every
token. In our occupancy language, \(\gamma_t\!\to\!1\) places the effective prefix
distribution at the student occupancy \(d_{\theta_{\mathrm{old}}}^{t}\) -- the regime
on-policy methods target -- while \(\gamma_t\!\to\!0\) places it at the teacher
occupancy \(d_T^{t}\) -- ordinary teacher-forced distillation -- and intermediate
values interpolate.

This should not be mistaken for off-policy or reward-weighted RL. We do not estimate
returns or advantages, apply no importance-sampling correction to a value function,
and never optimize a scalar reward; the target is a fixed teacher distribution and
the weight \(w_t\) encodes the reliability and student-relevance of the replayed
prefix, not a policy-gradient ratio. Nor do we require environment interaction: the
``on-policy'' part is the
student's action at a single step on a replayed prefix, not a fresh rollout. ReOPD is
thus best read as distillation with an on-policy supervised step and a
reliability-aware prefix distribution -- inheriting the relevance of RL-style
on-policy data and the density of distillation, while avoiding both the sparsity of
reward and the cost of online rollouts.

%% file: sections/exp.tex
\section{Experiments}
\label{sec:experiments}

\begin{table}[t]
    \caption{\textbf{Training parameters for different methods.} GRPO is only used to train the teacher model. And the teacher's rollouts for GRPO are reused for ReOPD. Search requires a large batch size for GRPO to reduce noise.}
    \label{tab:training_hyperparams}
    \scriptsize
    \centering
    \begin{tabular}{lccccc}
        \toprule
        \textbf{Hyper-parameter} & \textbf{Cold Start} & \textbf{SFT} & \textbf{GRPO} & \textbf{OPD} & \textbf{ReOPD} \\
        \midrule
        batch size & 64 & 64 & 32$\times$8 (n=8) for Math & 256$\times$1 (n=1) & 256$\times$1 (n=1) \\
        & & &  128$\times$4 (n=4) for Search \\
        lr & 5e-5 & 5e-5 & 1e-6 & 1e-6 & 1e-6 \\
        lr scheduler & cosine &  cosine & constant & constant & constant   \\
        min lr & 1e-6 & 1e-6 &  - & - & - \\
        warmup ratio & 0.1 & 0.1 & - & - & - \\
        epoch / steps & 3 epochs & 5 epochs & 200 steps & 200 steps & 200 steps \\
        clip ($\epsilon_{low}$, $\epsilon_{high}$) & - & - & (0.2, 0.28) & - & - \\
        temperature & - & - & 1.0 & 1.0 & 1.0 \\
        max\_gen\_tokens & - & - & 8192 for Math & 8192 for Math & 8192 for Math \\
        & - & - & 4196 for Search & 4196 for Search & 4196 for Search \\
        & - & - & - & 8192 for multi-envs & 8192 for multi-envs \\
        max number of tool calls & - & - & 16 for Math & 16 for Math & 16 for Math \\
        & - & - & 4 for Search & 4 for Search & 4 for Search \\
        & - & - & - & 16 for multi-envs & 16 for multi-envs \\
        concurrency of tool calls & - & - & 32 & 32 & - \\
        GPU mem for retriver & - & - & 80GB & 80GB & - \\
        \bottomrule
    \end{tabular}
\end{table}

\begin{table}[t]
    \caption{\textbf{Evaluation hyper-parameters for two environments.}}
    \label{tab:eval_hyperparams}
    \scriptsize
    \centering
    \begin{tabular}{lcc}
        \toprule
        \textbf{Hyper-parameter} & \textbf{Math with Python} & \textbf{Search} \\
        \midrule
        temperature & 1.0 & 1.0 \\
        max\_gen\_tokens & 16384 & 8192 \\
        accuracy avg@n & $n=8$ (AIME24, AIME25, AMC23) or $n=4$ (the rest) & $n=1$ \\
        max number of tool calls  & 16 & 4 \\
        \bottomrule
    \end{tabular}
\end{table}

\begin{table}[t]
    \caption{\textbf{The statistics of evaluation sets.}}
    \label{tab:eval_statistics}
    \centering
    \small
    \begin{tabular}{l|ccccccc}
        \toprule
        \textbf{Math task} & AIME24 & AIME25 & AMC23 & Minerva & Olympiad & MATH500 & Sum \\
        \textbf{Amount} & 30 & 30 & 40 & 272 & 675 & 500 & 1547 \\
        % \bottomrule
    \end{tabular}
    \vspace{1mm}
    \begin{tabular}{l|cccccccc}
        \toprule
        \textbf{Search tasks} & NQ & TriviaQA & PopQA & HotpotQA & 2Wiki & Musique & Bamboogle & Sum \\
        \textbf{Amount} & 3610 & 11313 & 14267 & 7405 & 12576 & 2417 & 125 & 51713 \\
        \bottomrule
    \end{tabular}
\end{table}

\begin{table}[t]
    \caption{\textbf{Distillation on mathematical reasoning.}
    Each student is trained and evaluated in the math (Python-tool) environment on six
    competition benchmarks, under teachers of increasing scale. Across teacher scales,
    ReOPD improves over student-on-policy OPD on mathematical reasoning -- the
    teacher-reliability regime of Section~\ref{sec:formulation} -- and stays close
    to OPD when the gap is small. The better of OPD and ReOPD in each column is in
    bold; GRPO is the teacher's own RL result, shown for reference.}
    \label{tab:main_math_single}
    \footnotesize
    \centering
    \begin{tabular}{lccccccc}
        \toprule
            \textbf{Method} & \textbf{AIME24} & \textbf{AIME25} & \textbf{AMC23} & \textbf{Minerva} & \textbf{Olympiad} & \textbf{MATH500} & \textbf{\textit{Avg.}} \\
        \midrule
            \multicolumn{8}{l}{\textit{Teacher model: Qwen3-4B-Instruct-2507}} \\
            GRPO & 42.1 & 32.1 & 79.7 & 37.9 & 56.1 & 85.1 & 55.5 \\
        \cmidrule(lr){1-8}
            \multicolumn{8}{l}{\textit{Student model: Qwen3-4B-Instruct-2507}} \\
            Base & 6.3 & 6.7 & 36.6 & 31.0 & 29.5 & 58.2 & 28.0 \\
            Cold Start & 22.1 & 20.8 & 66.6 & 32.4 & 49.0 & 79.1 & 45.0 \\
            SFT & 21.7 & 21.3 & 63.1 & 39.8 & 49.9 & 80.9 & 46.1  \\
            OPD & 35.4 & 29.2 & 77.8 & \textbf{43.3} & 58.2 & 86.9 & 55.1 \\
            \rowcolor{myblue}
            ReOPD & \textbf{40.8} & \textbf{31.3} & \textbf{80.0} & 42.9 & \textbf{59.4} & \textbf{88.9} & \textbf{57.2}  \\
        \midrule
        \midrule
            \multicolumn{8}{l}{\textit{Teacher model: Qwen3-8B}} \\
            GRPO & 28.3 & 23.3 & 73.1 & 38.5 & 52.0 & 84.3 & 49.9 \\
        \cmidrule(lr){1-8}
            \multicolumn{8}{l}{\textit{Student model: Qwen3-4B-Instruct-2507}} \\
            SFT & 22.1 & 22.9 & 63.8 & 36.0 & 44.7 & 80.8 & 45.0 \\
            OPD & 28.3 & 26.3 & 70.3 & 39.8 & 55.3 & \textbf{85.8} & 51.0 \\
            \rowcolor{myblue}
            ReOPD & \textbf{36.7} & \textbf{29.2} & \textbf{74.4} & \textbf{41.1} & \textbf{56.4} & 84.7 & \textbf{53.7} \\
        \midrule
        \midrule
            \multicolumn{8}{l}{\textit{Teacher model: Qwen3-30B-A3B-Instruct-2507}} \\
            GRPO & 47.9 & 35.0 & 91.3 & 46.4 & 63.7 & 92.1 & 62.7 \\
        \cmidrule(lr){1-8} 
            \multicolumn{8}{l}{\textit{Student model: Qwen3-4B-Instruct-2507}} \\
            OPD & 28.3 & 25.8 & \textbf{75.3} & \textbf{39.6} & 53.6 & 83.8 & 51.1 \\
            \rowcolor{myblue}
            ReOPD & \textbf{32.5} & \textbf{26.7} & 74.4 & 39.4 & \textbf{55.8} & \textbf{86.5} & \textbf{52.5} \\
        \cmidrule(lr){1-8} 
            \multicolumn{8}{l}{\textit{Student model: Qwen3-8B}} \\
            Base & 21.3 & 11.3 & 43.1 & 37.2 & 34.4 & 64.8 & 35.3 \\
            Cold Start & 23.9 & 19.6 & 67.8 & 38.6 & 49.0 & 82.0 & 46.8 \\
            OPD & 40.8 & 29.2 & 77.8 & \textbf{45.1} & \textbf{56.9} & \textbf{89.0} & 56.5 \\
            \rowcolor{myblue}
            ReOPD & \textbf{41.7} & \textbf{30.4} & \textbf{80.6} & 44.1 & 55.7 & 88.4 & \textbf{56.8} \\
        \bottomrule
    \end{tabular}
\end{table}

\begin{table}[t]
    \caption{\textbf{Distillation on search.} Each student is
    trained and evaluated in the search environment on general-QA and multi-hop-QA
    benchmarks ($^\dagger$ in-domain, $^*$ out-of-domain). The teacher stays in the
    same Qwen3-4B family and remains reliable on student-induced histories, so the
    student-occupancy shift dominates and ReOPD essentially matches student-on-policy
    OPD. The better of OPD and ReOPD in each column is in bold; GRPO is the teacher's
    own RL result, shown for reference.}
    \label{tab:main_search_single}
    \footnotesize
    \centering
    \begin{tabular}{lcccccccc}
        \toprule
            \multirow{3}{*}{\textbf{Method}} & \multicolumn{3}{c}{\textbf{General QA}} & \multicolumn{4}{c}{\textbf{Multi-Hop QA}} & \multirow{2}{*}{\textbf{\textit{Avg.}}} \\
        \cmidrule(lr){2-4} \cmidrule(lr){5-8}
            & \textbf{NQ$^\dagger$} & \textbf{TriviaQA$^*$} & \textbf{PopQA$^*$} & \textbf{HotpotQA$^\dagger$} & \textbf{2wiki$^*$} & \textbf{Musique$^*$} & \textbf{Bamboogle$^*$} \\
        \midrule
            \multicolumn{9}{l}{\textit{Teacher model: Qwen3-4B-Instruct-2507}} \\
            GRPO & 36.1 & 59.2 & 40.2 & 37.2 & 31.8 & 13.8 & 39.2 & 40.4 \\
        \cmidrule(lr){1-9} 
            \multicolumn{9}{l}{\textit{Student model: Qwen3-4B-Instruct-2507}} \\
            Base & 28.4 & 55.5 & 30.6 & 30.8 & \textbf{33.6} & 8.4 & 35.8 & 35.6 \\
            Cold Start & 26.2 & 48.6 & 33.6 & 24.7 & 26.6 & 5.8 & 24.8 & 32.1 \\
            OPD & 36.3 & \textbf{59.7} & 40.3 & \textbf{37.3} & 31.7 & \textbf{14.5} & \textbf{44.0} & \textbf{40.6} \\
            \rowcolor{myblue}
            ReOPD & \textbf{36.4} & 59.6 & \textbf{40.5} & \textbf{37.3} & 31.5 & 14.0 & 37.6 & 40.5 \\
        \midrule
        \midrule
            \multicolumn{9}{l}{\textit{Teacher model: Qwen3-8B}} \\
            GRPO & 36.6 & 59.3 & 40.2 & 36.7 & 34.7 & 13.8 & 39.2 & 41.0  \\
        \cmidrule(lr){1-9} 
            \multicolumn{9}{l}{\textit{Student model: Qwen3-4B-Instruct-2507}} \\
            OPD & \textbf{35.5} & 57.3 & 39.0 & \textbf{35.0} & \textbf{31.2} & \textbf{12.8} & \textbf{37.6} & \textbf{39.1} \\
            \rowcolor{myblue}
            ReOPD & 34.9 & \textbf{57.5} & \textbf{39.1} & 34.6 & 31.0 & 11.9 & \textbf{37.6} & 39.0 \\
        \bottomrule
    \end{tabular}
\end{table}

\begin{table}[t]
    \caption{\textbf{Multi-environment distillation: one student for both domains.}
    A \emph{single} Qwen3-4B student is trained jointly across the math and search
    environments and evaluated on both -- unlike the single-environment results
    (Tables~\ref{tab:main_math_single} and~\ref{tab:main_search_single}), where a
    separate student is trained per domain. The teacher, by contrast, is \emph{not}
    shared across domains: each environment has its own RL-trained Qwen3-4B teacher, so
    the GRPO reference columns for math and search are two different models. Distilled
    from its environment's teacher, the single joint student stays on par with
    student-on-policy OPD in each domain, showing that ReOPD's reliability-aware schedule
    carries over to joint training. Benchmarks are rows and methods are columns
    ($^\dagger$ in-domain, $^*$ out-of-domain); the student base model is
    Qwen3-4B-Instruct-2507, and GRPO is the teacher's own RL result for
    reference. For each benchmark the better of OPD and ReOPD is in bold.}
    \label{tab:main_multi}
    \footnotesize
    \centering
    \begin{tabular}{l c c c >{\columncolor{myblue}}c}
        \toprule
            \multirow{3}{*}{\textbf{Benchmark}} & \textbf{Teacher (Qwen3-4B)} & \multicolumn{3}{c}{\textbf{Student (Qwen3-4B)}} \\
        \cmidrule(lr){2-2} \cmidrule(lr){3-5}
             & \textbf{GRPO} & \textbf{Cold Start} & \textbf{OPD} & \textbf{ReOPD} \\
        \midrule
            \multicolumn{1}{l}{\textit{Mathematical reasoning}} & & & & \\
            AIME24            & 42.1 & 18.8 & 34.2 & \textbf{36.3} \\
            AIME25            & 32.1 & 23.8 & 29.6 & \textbf{30.8} \\
            AMC23             & 79.7 & 67.8 & \textbf{79.7} & 75.5 \\
            Minerva           & 37.9 & 40.5 & \textbf{44.5} & \textbf{44.5} \\
            Olympiad          & 56.1 & 48.9 & 57.1 & \textbf{57.4} \\
            MATH500           & 85.1 & 82.4 & \textbf{87.3} & \textbf{87.3} \\
            \quad\textit{Avg.}& 55.5 & 47.0 & \textbf{55.4} & 55.3 \\
        \midrule
            \multicolumn{1}{l}{\textit{Search}} & & & & \\
            NQ$^\dagger$       & 36.1 & 25.7 & 37.3 & \textbf{37.7} \\
            TriviaQA$^*$       & 59.2 & 51.3 & \textbf{61.0} & 60.7 \\
            PopQA$^*$          & 40.2 & 33.2 & 40.5 & \textbf{40.6} \\
            HotpotQA$^\dagger$ & 37.2 & 25.1 & \textbf{38.0} & 37.6 \\
            2wiki$^*$          & 31.8 & 27.1 & 31.6 & \textbf{32.1} \\
            Musique$^*$        & 13.8 & 6.3 & \textbf{14.5} & 13.8 \\
            Bamboogle$^*$      & 39.2 & 29.6 & \textbf{44.0} & 37.6 \\
            \quad\textit{Avg.} & 40.4 & 32.7 & \textbf{41.0} & \textbf{41.0} \\
        \bottomrule
    \end{tabular}
\end{table}

\noindent\paragraph{Models.}
We use Qwen3-family models~\citep{yang2025qwen3} as both teachers and students.
To vary the teacher--student capability gap, we distill from teachers of three scales --
Qwen3-4B-Instruct-2507, Qwen3-8B, and Qwen3-30B-A3B-Instruct-2507 -- into a
Qwen3-4B-Instruct-2507 student; we additionally use Qwen3-8B as a student under the
Qwen3-30B-A3B-Instruct-2507 teacher.

All models go through a cold start stage with supervised finetuning (SFT) on trajectories collected by a stronger model, which aims to teach the models to better utilize the available tools and output a correct format. Without the cold start stage, we observe that the model frequently ignores the Python tool for math tasks. After SFT, the teacher model is then trained with GRPO on the training set. Such a design aims to imitate a practical setting: We first apply RL to obtain a stronger teacher model, and then we distill from the teacher model to the student model. In short, the teacher model will go through SFT (cold start) and RL, while the student model is initialized from the cold start version.

By default, the teacher prefix pool is the set of agentic trajectories produced by the
teacher while being trained with on-policy RL (GRPO). In this way, we can reuse the trajectories from the training of the teacher model without any extra collection cost (Section~\ref{sec:aopd}).

\noindent\paragraph{Datasets \& environments.}
We train and evaluate in two agentic tasks, both following the same three-stage protocol: an SFT cold start on 2K teacher trajectories (for both teacher and student), GRPO~\citep{shao2024deepseekmath} training of the teacher, and student distillation (OPD or ReOPD) on that same prompt set. \emph{(i) Mathematical reasoning (Python-tool environment).} Following ReTool~\citep{feng2025retool}, cold-start traces are taken from ReTool and the 6.4K prompts for GRPO and distillation are taken from the DAPO training set~\citep{yu2025dapo}. We evaluate on six competition benchmarks -- AIME24~\citep{aime24}, AIME25~\citep{aime25}, AMC23~\citep{li2024numinamath}, Minerva Math~\citep{lewkowycz2022minerva}, OlympiadBench~\citep{he2024olympiadbench}, and MATH500~\citep{hendrycks2021math500}. \emph{(ii) Search (retrieval environment).} Following Search-R1~\citep{jin2025search}, prompts come from the merged NQ~\citep{kwiatkowski2019nq} and HotpotQA~\citep{yang2018hotpotqa} training sets: we randomly sample 2K prompts for cold-start trajectory generation by Qwen3-30B-A3B-Instruct-2507, and another 6.5K prompts for teacher GRPO and student distillation. For retrieval, we use the 2018 Wikipedia dump~\citep{karpukhin2020dense} as the knowledge source and E5~\citep{wang2022text} as the retriever, with top-3 retrieved passages. We evaluate on seven QA benchmarks -- general QA (NQ, TriviaQA~\citep{joshi2017triviaqa}, PopQA~\citep{mallen2023popqa}) and multi-hop QA (HotpotQA, 2WikiMultiHopQA~\citep{ho2020twowiki}, MuSiQue~\citep{trivedi2022musique}, Bamboogle~\citep{press2023bamboogle}) -- with NQ and HotpotQA in-domain ($^\dagger$) and the remaining five out-of-domain ($^*$). The statistics for the evaluation sets is listed in Table~\ref{tab:eval_statistics}

In the multi-environment setting, a single student is trained jointly on both environments' training data for both SFT and OPD/ReOPD and evaluated on all benchmarks. The training data from both environments are concatenated and shuffled. We still use the teacher model trained on the single environment, which is a practical setting that distills knowledge from domain-specific teacher to a general student.

\noindent\paragraph{Baselines.}
We compare ReOPD against the following methods. \emph{Base}: the untrained model. \emph{Cold
Start}: the model is trained with SFT for the cold start stage. This is also the initialized model for the following methods and ReOPD. \emph{SFT}: the model is trained on the trajectories collected from the teacher model (same as ReOPD) with SFT. \emph{OPD}: fully student-on-policy distillation, where prefixes are re-rolled by the current student through the live environment and the teacher supplies the per-step target. \emph{GRPO}: the teacher's own RL result, shown for reference as a soft upper bound at the teacher's scale, not a competing student.

\noindent\paragraph{Implementation details.}
In Table \ref{tab:training_hyperparams} and \ref{tab:eval_hyperparams}, we list the main experimental settings for training and evaluation. We only train for 200 steps for RL, since the training converges very soon. ReOPD and OPD share the same teacher target, batch size, number of update steps and so on. The only difference is the prefix distribution. Unless noted, ReOPD uses the sampling implementation with a step-decay schedule (Section~\ref{sec:aopd}) with $\kappa=0.6$. Following \citet{feng2025retool} and \citet{jin2025search}, we report the macro and micro averages for the math and search tasks, respectively. All experiments are conducted on 8$\times$H100 with Slime~\citep{slime_github}. The training can be done within 3 hours for ReOPD.

\begin{figure}[t]
    \centering
    \begin{subfigure}{0.48\textwidth}
        \centering
        \includegraphics[width=\linewidth]{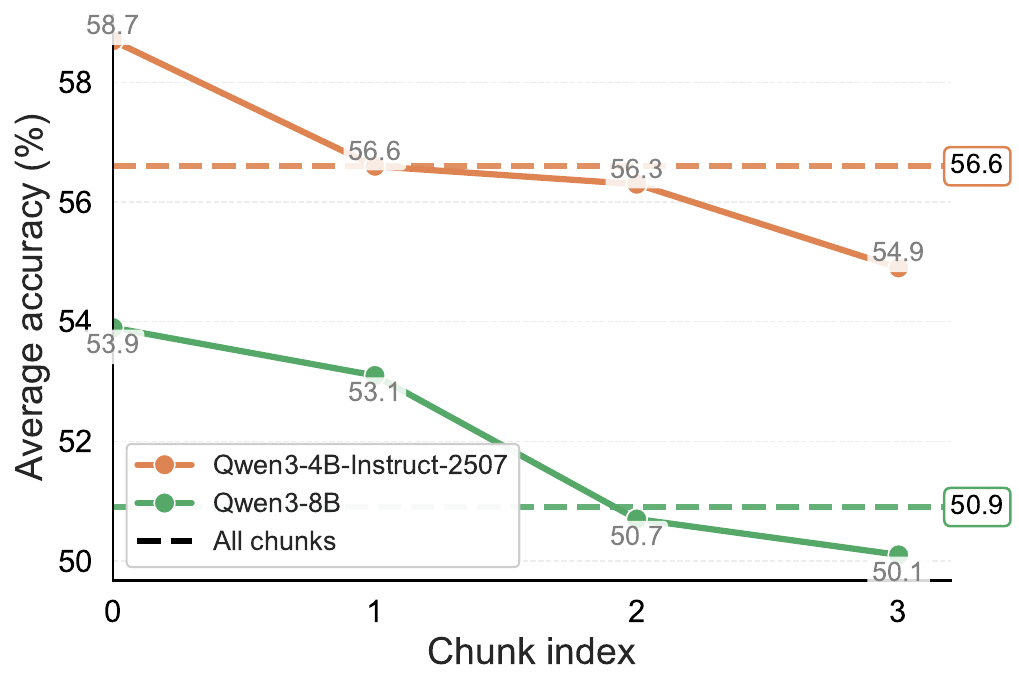}
        \caption{Sampling by chunk index.}
        \label{fig:chunk_index}
        %\vspace{-6mm}
    \end{subfigure}
    \begin{subfigure}{0.48\textwidth}
        \centering
        \includegraphics[width=\linewidth]{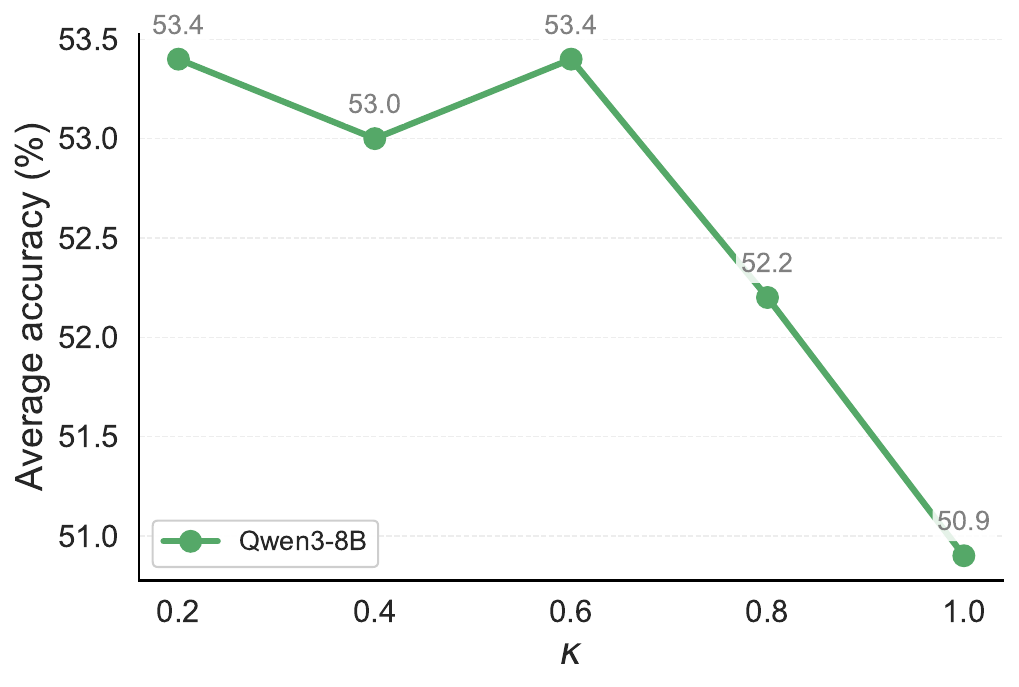}
        \caption{Sampling with decay $\kappa$.}
        \label{fig:sampling_decay}
        %\vspace{-6mm}
  \end{subfigure}
  \caption[]{\textbf{Favoring early, low-shift steps improves distillation.}
  Both panels use ReOPD's sampling implementation and put more training mass on early
  interaction steps, where the student stays close to the teacher and the two-sided
  shift is small. \textbf{(a)} \emph{Sampling by chunk}: drawing supervised positions
  from early chunks yields the best student, and pushing mass to later chunks degrades
  it. \textbf{(b)} \emph{Sampling with decay $\kappa$}: applying the step-decay
  \(\kappa^{t}\) as the sampling probability, a moderate decay outperforms uniform
  sampling (\(\kappa\!=\!1\)) and tracks the same trend. Final performance is governed
  by how strongly early steps are favored, validating the step-decay of
  Section~\ref{sec:aopd}. The student model is Qwen3-4B-Instruct-2507, and the models in the legends are teacher model.}
  \label{fig:poc_hints}
  \vspace{-4mm}
\end{figure}

\subsection{Main Results}

\paragraph{Single environment.}
Tables~\ref{tab:main_math_single} and~\ref{tab:main_search_single} compare ReOPD with fully online OPD in single-environment distillation. On mathematical reasoning, ReOPD consistently improves the student average over OPD across teacher settings. With a Qwen3-4B teacher and Qwen3-4B student, ReOPD improves the average from $55.1$ to $57.2$; with a Qwen3-8B teacher, it improves from $51.0$ to $53.7$; and with a Qwen3-30B-A3B teacher, it improves the Qwen3-4B student from $51.1$ to $52.5$. For the stronger Qwen3-8B student under the Qwen3-30B-A3B teacher, ReOPD also slightly improves the average from $56.5$ to $56.8$. These gains are largest when the teacher--student gap is wider, consistent with our teacher-reliability view: fully student-on-policy roll-ins can drift into histories where the teacher target is less reliable, while ReOPD stays anchored to teacher-supported prefixes.

\begin{wrapfigure}{r}{0.43\textwidth}
    \vspace{-6mm}
    \includegraphics[width=0.98\linewidth]{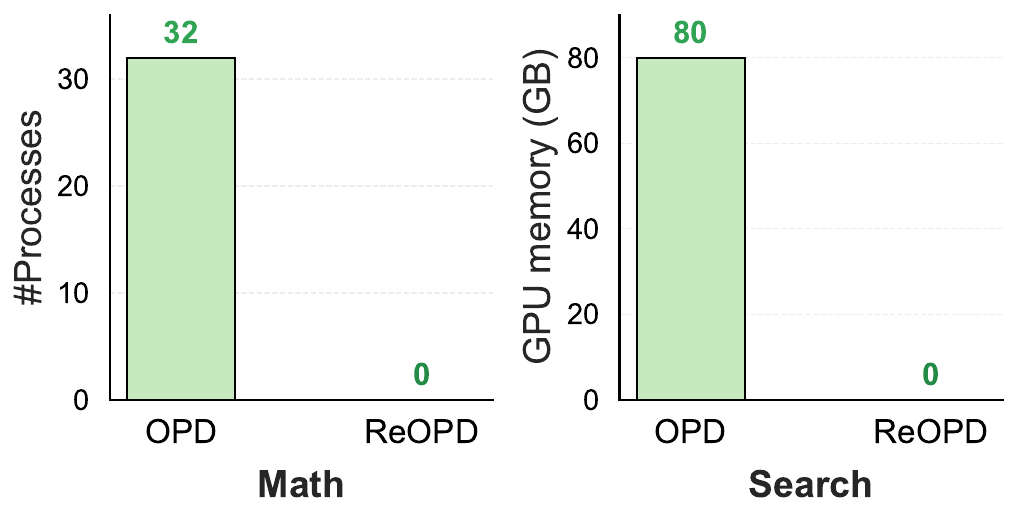} 
    \vspace{-3mm}
    \caption[]{\textbf{The resources used by different environments.} For math, the processes are utilized for the execution of Python code in parallel. For search, the GPU memory is utilized for deploying the embedding model and the storage of the Wikipedia embeddings.}
    \label{fig:process_memory}
    \vspace{-6mm}
\end{wrapfigure}

On search, ReOPD essentially matches OPD. With a Qwen3-4B teacher, OPD and ReOPD obtain $40.6$ and $40.5$ average accuracy, respectively; with a Qwen3-8B teacher, they obtain $39.1$ and $39.0$. This is the regime where the teacher remains reliable on student-induced histories, so the benefit of teacher-anchored replay is smaller. Together, the math and search results show that ReOPD preserves OPD-level accuracy in the reliable-teacher regime and improves over OPD when teacher reliability becomes the dominant constraint.

\paragraph{Multiple environments.}
Table~\ref{tab:main_multi} evaluates a single Qwen3-4B student trained jointly on the math and search environments. I.e. we distill the knowledge from two domain-specific teachers to a shared student, which is also known as multi-teacher on-policy distillation (MOPD). In this setting, the resources used by the environments grow quickly with an increasing number of environments. ReOPD remains on par with OPD in both domains while avoiding online environment interaction during student training. This indicates that the same replay-based training pipeline can combine heterogeneous environments without keeping all tools online during distillation. The result is especially useful operationally: each environment can contribute teacher rollouts to a shared offline pool, and the student can then be trained jointly from the merged data.

\paragraph{Efficiency.}
Figure~\ref{fig:overall} shows that ReOPD keeps the accuracy benefits of OPD while removing its main training-time bottleneck. Unlike OPD, which must execute fresh environment rollouts and tool calls during every student update, ReOPD replays teacher-recorded prefixes and therefore uses zero tool calls during student training. This also translates into faster updates: ReOPD is at least $4\times$ faster per rollout than OPD. As shown in Figure~\ref{fig:process_memory}, for OPD, the efficient tool call requires a large concurrency (32 processes), and the deployment of environment sometimes requires GPU (80GB memory for search), while ReOPD doesn't need these at all. It is also predictable that the required resources for OPD will increase  for the multi-environment distillation, but ReOPD doesn't need any online environment. Thus, ReOPD preserves or improves OPD-level accuracy while replacing expensive online interaction with reusable offline teacher trajectories.

\subsection{Analysis and Discussion}
\paragraph{Validating the step-level schedule.}
Figure~\ref{fig:poc_hints} tests whether the depth proxy of Section~\ref{sec:aopd}
actually matters for final performance. Because the prefix is teacher-forced and only the current step is
student-on-policy, the two-sided shift is smallest at early steps and grows with
depth. Both panels use the sampling implementation: varying which chunk is drawn
(Fig.~\ref{fig:chunk_index}) performs best when it draws more early chunks, and varying
the decay steepness \(\kappa\) (Fig.~\ref{fig:sampling_decay}) shows the same trend as
late steps are sampled less. The agreement between the two knobs corroborates the
step-decay schedule of Section~\ref{sec:aopd}; the improvement from favoring early,
low-shift steps shows that the gain comes from reliability-aware prefix selection
rather than simply using more data.

\paragraph{Two regimes in practice.}
The tables reveal the two regimes anticipated by our analysis. On search and
multi-hop QA, where the teachers stay reliable on student-induced histories, the
student-occupancy shift dominates: student-on-policy OPD is already strong, and
ReOPD -- whose teacher-anchored roll-in is already student-relevant here, since the two
occupancies nearly coincide -- essentially matches it (e.g.\ a $40.5$ vs.\ $40.6$
average under the single-environment setting). On mathematical reasoning the picture changes as
the teacher--student gap widens. With stronger teachers (Qwen3-8B and
Qwen3-30B-A3B), student roll-ins drift into histories where the teacher's trajectory
is the more trustworthy signal, so the teacher-reliability shift dominates; ReOPD's
teacher-anchored roll-in sidesteps these unreliable histories and attains its largest
gains, improving AIME24 from $28.3$ to $36.7$ under the Qwen3-8B teacher while
lifting the 4B-student math average in every teacher setting ($55.1\!\to\!57.2$,
$51.0\!\to\!53.7$, $51.1\!\to\!52.5$). Which roll-in is preferable thus depends on
the regime -- the teacher-anchored pool wins when the gap is large and ties
student-on-policy OPD when the teacher is already reliable, rather than committing to a
fixed student-on-policy roll-in. Our analysis prescribes a steeper decay as the
teacher--student gap grows (Section~\ref{sec:aopd}); notably, these results obtain the
regime-appropriate behavior with a single fixed \(\kappa\!=\!0.6\) across tasks rather
than per-task tuning, and gap-adaptive schedules are a natural extension.

\begin{figure}[t]
    \centering
    \includegraphics[width=\linewidth]{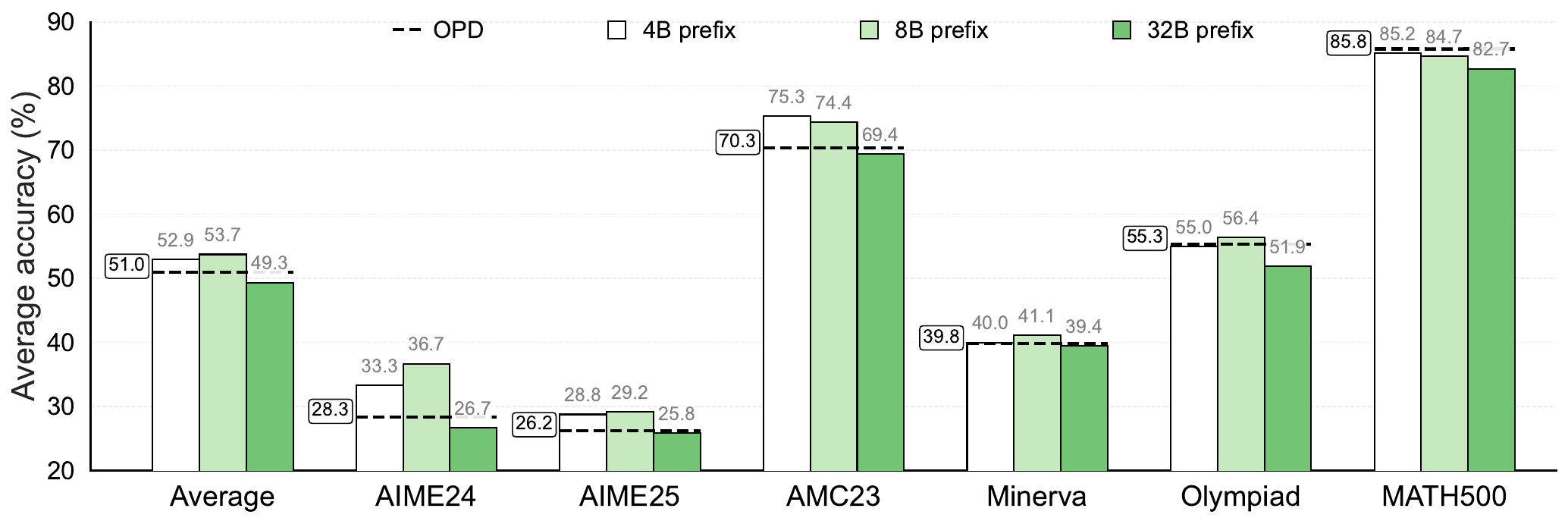}
 %   \vspace{-6mm}
  \caption[]{\textbf{Prefix source: the teacher's own model gives the best prefixes,
  not a stronger generator.} Teacher Qwen3-8B, student Qwen3-4B-Instruct-2507; we fix
  the distillation target to the teacher and vary only the prefix generator.
  Performance peaks when prefixes come from the teacher itself and \emph{degrades} with
  larger or stronger generators -- a direct test of the teacher-reliability view: the
  teacher's conditional is trustworthy only on prefixes within its own support.}
  \label{fig:prefix_source}
%  \vspace{-4mm}
\end{figure}

\paragraph{Prefix source: reliability, not raw capability.}
Step index is only a proxy; what it ultimately stands in for is support overlap
between the student-relevant histories and the teacher's reliable region.
Figure~\ref{fig:prefix_source} isolates this by varying \emph{where prefixes come
from} while holding the distillation target fixed at the teacher. One might expect prefixes
from a larger or stronger generator to help, since such models make fewer mistakes.
We observe the opposite: prefixes drawn from the \emph{teacher's own model} are
best, and substituting a larger or stronger prefix generator degrades the distilled
student. This is a direct test of the teacher-reliability shift. The teacher's
recorded conditional is a reliable improvement signal only on the histories the teacher
itself is likely to visit; prefixes from a different, stronger model fall outside
this support, so querying the teacher there yields a less trustworthy target. The
right notion for prefix selection is therefore \emph{reliability with respect to the
teacher}, not the standalone capability of the prefix generator -- precisely the
quantity ReOPD's reliability-aware prefix selection is designed to respect.

\begin{table}[t]
    \caption{\textbf{RL-collected pool vs.\ stationary teacher pool.} ReOPD reuses the
    teacher's RL rollouts, a \emph{mixed-policy} pool spanning early-to-late checkpoints,
    rather than a stationary pool drawn from the final teacher. Distilling against the
    final teacher from each pool gives nearly identical students, so the free RL
    by-product is as good as a dedicated collection. The teacher and student models are
    Qwen3-8B and Qwen3-4B-Instruct-2507, respectively.}
    \label{tab:pool_source}
    \small
    \centering
    \begin{tabular}{lccccccc}
        \toprule
            \textbf{Prefix pool} & \textbf{AIME24} & \textbf{AIME25} & \textbf{AMC23} & \textbf{Minerva} & \textbf{Olympiad} & \textbf{MATH500} & \textbf{\textit{Avg.}} \\
        \midrule
            Final-teacher (stationary) & 37.9 & 30.0 & 72.2 & 39.2 & 56.2 & 85.0 & 53.4  \\
            \rowcolor{myblue}
            RL rollouts (free, mixed) & 36.7 & 29.2 & 74.4 & 41.1 & 56.4 & 84.7 & 53.7 \\
        \bottomrule
    \end{tabular}
\end{table}

\paragraph{RL-collected pool matches a stationary teacher pool.}
A natural concern is that the RL-collected pool is \emph{not} stationary: it mixes
rollouts from weaker early checkpoints with the final teacher, so it need not match a
pool drawn purely from the converged teacher. Table~\ref{tab:pool_source} tests this
by fixing the distillation target to the final teacher and varying only the prefix
pool. The two pools yield nearly identical students, confirming that the
checkpoint-to-checkpoint drift in RL rollouts is within the teacher's reliable support
and that no dedicated collection is needed -- the free by-product suffices.

\paragraph{Teacher's prefix resampling.}
For some cases, we don't have the teacher's prefix from RL. For example, the teacher is already a strong open-sourced model. And we don't have resource to do RL on it. We can deploy the environment for the teacher to sample all trajectories for all prompts at once, and use these trajectories as prefixes for ReOPD. Table \ref{tab:pool_source} already shows that the performance is similar to the one using prefixes from RL. In Table \ref{fig:time_resample}, we take the time used for the teacher's resampling into account when comparing to OPD. Our ReOPD is still efficient, $> 2\times$. Notably, the environment is only online for the teacher's resampling, and deactivated for the training of student, which is resource efficient.

\begin{wrapfigure}{r}{0.43\textwidth}
    \vspace{-6mm}
    \includegraphics[width=0.98\linewidth]{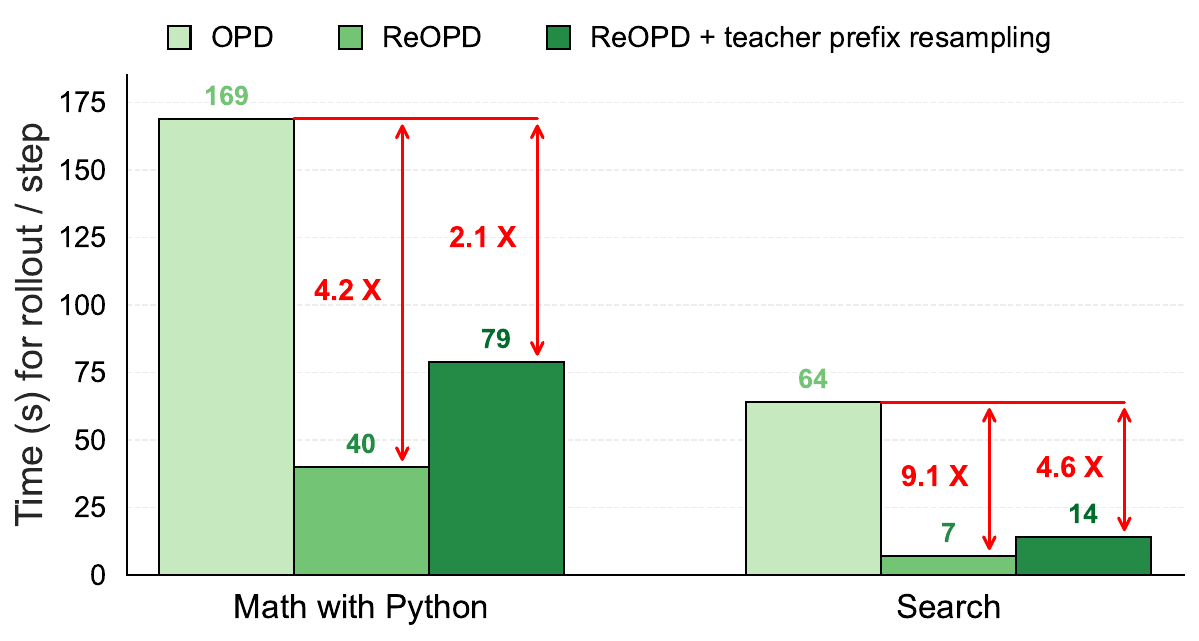} 
    \vspace{-3mm}
    \caption[]{\textbf{Rollout time when taking the teacher's rollout into account.} When the teacher's trajectories from RL are not available, one could use the teacher to sample the trajectories for all prompts at once. We take this time into account for the comparison. For this setting, the environment is only online for the teacher's rollout.}
    \label{fig:time_resample}
    \vspace{-6mm}
\end{wrapfigure}

\FloatBarrier

%% file: sections/conclusion.tex
\section{Conclusion}

We studied Replayed-Prefix On-Policy Distillation in an efficient, off-environment regime
that reuses a fixed pool of pre-collected teacher trajectories, replaying a
teacher-forced prefix and supervising the student's action at the current step.
Removing the environment does not remove the temporal structure of multi-turn learning:
we identified a \emph{prefix trap} with a temporal layer (compounding errors) and a
distributional layer (a two-sided shift), and bounded the gap to an ideal interactive
improvement objective~(\plaineqref{eq:ideal}) by a student occupancy-mismatch term and a
teacher reliability term~(\plaineqref{eq:bound}). This recasts multi-turn OPD as \emph{reliability-aware prefix distribution design}
-- balancing student relevance against teacher reliability, with fully
student-on-policy and teacher-forced roll-ins as the two extremes -- which the
resulting algorithm, ReOPD, realizes with a simple step-decay schedule applied by
default as a sampling probability (equivalently, a loss weight). Our experiments confirm the predicted
regime-aware behavior: ReOPD stays OPD-like when the teacher--student gap is small and
yields its largest gains when the gap is large and teacher-supported prefixes are the
more reliable supervision region.

\paragraph{Limitations and future work.}
Our analysis assumes access to a pre-collected pool of teacher trajectories with
recorded observations. In our experiments this comes for free from the teacher's
own RL rollouts, but more generally the pool's coverage and quality bound what the
student can learn. The reliability surrogate we adopt reduces, to first order, to a
step-decaying weight, which is effective but coarse: it captures \emph{how deep} a
prefix is rather than directly measuring \emph{how far} a given prefix lies from the
overlap between student-relevant histories and the teacher's reliable support, and a
learned or data-dependent reliability estimate may improve on the schedule. Finally,
our two-sided bound treats the teacher reliability error through teacher support as a
surrogate; tighter, directly estimable notions of teacher reliability -- and schedules
or weights that adapt to them online -- are a natural next step.